\theoremstyle{plain}
\newtheorem{theorem}{Theorem}
\newtheorem*{theorem*}{Theorem}
\newtheorem*{lemma*}{Lemma}
\newtheorem{lemma}[theorem]{Lemma}
\newtheorem{remark}{Remark}
\theoremstyle{definition}
\DeclareMathOperator*{\subjto}{s.t.}
\newcommand{\set}[1]{\mathcal{#1}}
\newcommand{\vect}[1]{\bm{#1}}
\newcommand{\whvect}[1]{\widehat{\bm{#1}}}
\DeclarePairedDelimiter{\paren}{(}{)}                       
\newcommand{\diag}[1]{\operatorname{diag}\paren*{#1}}       
\newcommand{\Ind}[1]{\mathds{I}\paren*{#1}}                 
\def  \Tr    {\mathsf{T}}                       
\def \BigO   {\mathcal{O}}
\newcommand*\dotp{\mathpalette\dotp@{0.55}}
\newcommand*\dotp@[2]{\mathbin{\vcenter{\hbox{\scalebox{#2}{$\m@th#1\bullet$}}}}}
\DeclareMathSymbol{\widetildesym}{\mathord}{largesymbols}{"65}
\newcommand\lowerwidetildesym{%
	\text{\smash{\raisebox{-1.3ex}{%
				\hspace{.3ex}$\widetildesym$}}}}
\newcommand\fixwidetilde[1]{%
	\mathchoice
	{\accentset{\displaystyle\lowerwidetildesym}{#1}}
	{\accentset{\textstyle\lowerwidetildesym}{#1}}
	{\accentset{\scriptstyle\lowerwidetildesym}{#1}}
	{\accentset{\scriptscriptstyle\lowerwidetildesym}{#1}}
}
\newcommand{\wtvect}[1]{\fixwidetilde{\bm{#1}}}
\newcommand{\Function}[1]{\normalfont\textsc{#1}}
\definecolor{darkyellow}{rgb}{0.64, 0.48, 0}
\definecolor{darkgreen}{rgb}{0, 0.5, 0.22}
\definecolor{darkred}{rgb}{0.75, 0, 0}
\title{Restructuring, Pruning, and Adjustment of Deep Models for Parallel Distributed Inference}
\author{
	Afshin Abdi, Saeed Rashidi, Faramarz Fekri, Tushar Krishna \\
	School of Electrical and Computer Engineering \\
	Georgia Institute of Technology, Atlanta, GA \\
	\texttt{\{abdi,saeed.rashidi,fekri,tushar\}@gatech.edu}
}
\begin{document}
	
\maketitle

\begin{abstract}
Using multiple nodes and parallel computing algorithms has become a principal tool to improve training and execution times of deep neural networks as well as effective collective intelligence in sensor networks.
In this paper, we consider the parallel implementation of an already-trained deep model on multiple processing nodes (a.k.a. workers) where the deep model is divided into several parallel sub-models, each of which is executed by a worker. Since latency due to synchronization and data transfer among workers negatively impacts the performance of the parallel implementation, it is desirable to have minimum interdependency among parallel sub-models. To achieve this goal, we propose to rearrange the neurons in the neural network and partition them (without changing the general topology of the neural network), such that the interdependency among sub-models is minimized under the computations and communications constraints of the workers. We propose RePurpose, a layer-wise model restructuring and pruning technique that guarantees the performance of the overall parallelized model. To efficiently apply RePurpose, we propose an approach based on $\ell_0$ optimization and the Munkres assignment algorithm. We show that, compared to the existing methods, RePurpose significantly improves the efficiency of the distributed inference via parallel implementation, both in terms of communication and computational complexity.

\end{abstract}

\section{Introduction}
\label{sec:introduction}
In recent years, the size and complexity of deep neural networks has been increased significantly in terms of model's structure and number of parameters. 
Consequently, real-time implementation and inference in many machine learning (ML) problems has become a challenging task. 
Although the execution time of deep neural networks can be improved significantly by the application of parallel computing algorithms and using multiple processing units (such as GPU's or clusters of computing nodes), it generally requires synchronization and data exchange among processing units to some extent. 
%
This is mainly due to the fact that in parallel computations, each processing unit performs a portion of the computations, its inputs generally depend on the outputs from other units, and the results of computations should be aggregated to yield the desired output. These co-dependencies can lead to significant delays in computations.
%
Moreover, in some real-world scenarios, such as sensor networks, the inference is done on the data observed by the entire network, i.e., each node in the network only observes part of the data. 
However, transferring all data to a central powerful node to aggregate and perform the ML task is undesirable due to the sheer amount of data to be collected, limited computational power, privacy concerns, or even availability of such a node. 
Hence, it is more favorable to develop a distributed equivalence of a deep model for deploying over the processors/sensor network. 

In the aforementioned applications, straightforward parallel computing algorithms cannot be arbitrarily scaled up for deep models with complex connectivity structures.
The majority of past works on distributed/parallel execution of deep neural networks are concerned with algorithmic aspects of the parallel implementation of the neural network (e.g., \cite{Zinkevich2010Parallelized,Chung2014Parallel,DeGrazia2012Parallelization}). 
However, here, we focus on the structure of deep models and how we can modify it for efficient parallel distributed implementation.

%
In recent years, there has been an increasing interest in compressing, pruning, or modifying the structure of deep models to reduce their computational or storage costs, while keeping the accuracy or performance of the modified model acceptable. 
The majority of these approaches can be classified into three categories:
\begin{itemize}[leftmargin=12pt,itemsep=1pt,partopsep=1pt,parsep=1pt,topsep=1pt]
	\item \emph{Knowledge Distillation} to train a shallow or small model (referred to as student network) that mimics the behavior of an already trained complex model (a.k.a. teacher network) or an ensemble of teacher networks (see e.g., \cite{Hinton2015Distilling,Romero2015Fitnets,Zagoruyko2017Paying}). 
	
	\item \emph{Using Structured Parameters} to reduce the size of deep model or its processing time. 
	Examples include using circulant matrices~\cite{Cheng2015Fast} or Adaptive Fastfood transform~\cite{Yang2015Deep} for fully connected layers, and separable filters ~\cite{Rigamonti2013Learning} or low-rank tensor decomposition~\cite{Tai2016Convolutional} for convolutional layers.
	
	\item \emph{Pruning Parameters} has been used extensively to reduce the complexity of the model as well as over-parametrization. $\ell_1$ or $\ell_0$ regularization~\cite{Louizos2018Learning}, and group-sparsity~\cite{Zhou2016Less,Wen2016Learning} have been successfully used to promote sparsity of the parameters during training. 
	Model pruning algorithms such as Optimal Brain Damage \cite{Cun1990Optimal}, Optimal Brain Surgeon \cite{Hassibi1993Second}, hard-thresholding the parameters \cite{Han2015Learning}, and similar works \cite{Castellano1997iterative,Leung2001pruning}, mainly focus on removing the insignificant edges or neurons, by considering the magnitude of the weights or their approximate Hessian matrix as a measure of importance. More recently, Aghasi, et al.~\cite{Aghasi2017Net,Aghasi2020Fast} proposed Net-Trim, a convex optimization technique to prune the parameters of the deep model by analyzing the signals in the neural network.
\end{itemize}

Although it is possible to design deep models according to the capability and constraints of the processing system, following such an approach requires training a new deep model for every target hardware which is infeasible or demanding in many ML problems. 
Further, imposing a possibly unnecessary structure in advance during training a deep model would likely be limiting in terms of model performance and accuracy. It will be also an undesirable approach for parallel implementation since a model specifically designed for optimum implementation on a target platform or architecture may be far from optimum on other platforms (e.g., GPUs with different compute capabilities, or CPU vs GPU vs sensor network). Hence optimizing and fixing the structure for one particular parallel distributed setting in advance would limit the optimal deployment on other platforms.
As a result, we assume that a complex deep model has already been trained with minimum or no hardware-specific constraints on its parameters or structure. Our goal would be readjusting the model via restructuring the layers and manipulating the parameters of the neural network without changing its general topology for more efficient parallel implementation. 
%
%

\begin{wrapfigure}[11]{r}{0.5\textwidth}
	\vspace{-3mm}
	\centering
	\begin{overpic}[width=\linewidth]{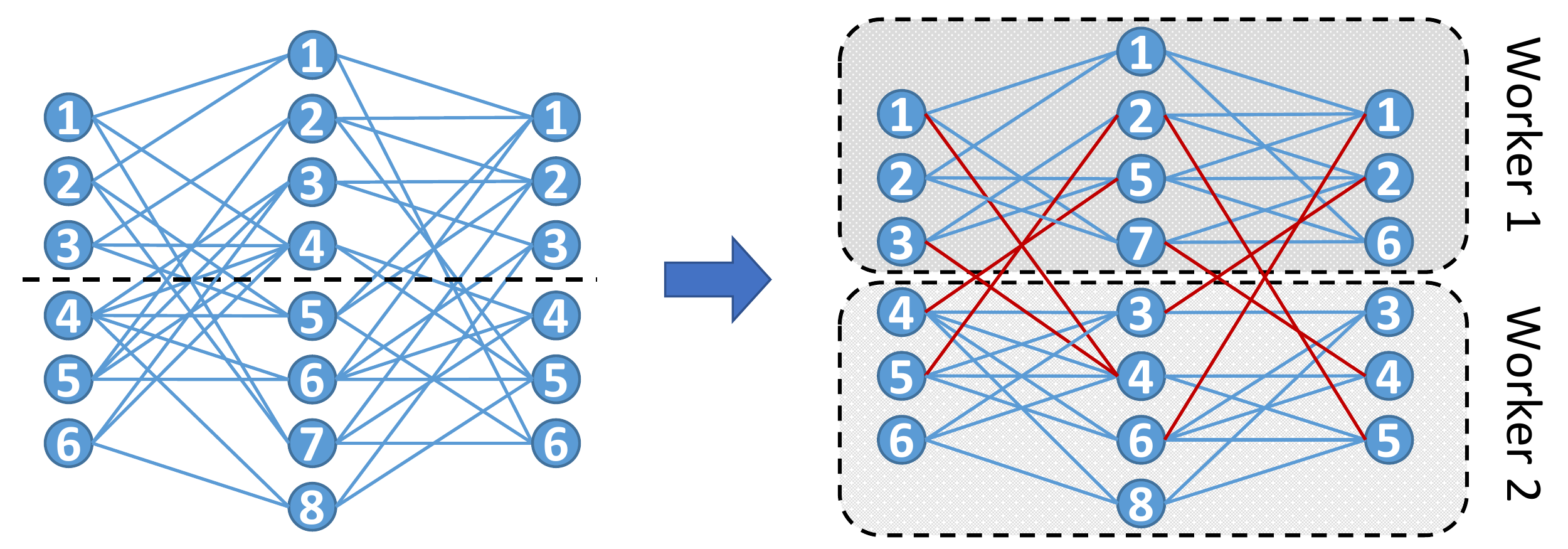}
		\put (4,-5) {\scalebox{.9}{(a) Original model}} 
		\put (52,-5) {\scalebox{.9}{(b) Restructured model}} 
	\end{overpic}
	~
	\caption{Restructuring a neural network to reduce communication between processing units}
	\label{fig:RestructuringModelExample}
	\vspace{-5mm}
\end{wrapfigure}
As an example, consider the simple neural network in Fig.~\ref{fig:RestructuringModelExample}(a). Simply partitioning the model into two sub-models (as depicted by a dashed line in the Fig.~\ref{fig:RestructuringModelExample}(a)) imposes lots of communication between the two partitions. However, by rearranging the neurons properly, the co-dependency (and hence required communications) between the two sub-models (the red edges in Fig.~\ref{fig:RestructuringModelExample}(b)) is reduced substantially. It is worth mentioning that there are approximately $\BigO(P^N)$ different partitioning to distribute computations of a neural network's layer with $N$ neurons over $P$ workers. Hence, enumerating all such possibilities and choosing a good one is infeasible specially for large networks. In this paper, we propose a systematic approach to perform such partitioning and parameter adjustment to ensure efficient implementation of the modified model while keeping its accuracy close to the original model.

%
\vspace{4mm}
\subsection*{Notations}
\begin{wrapfigure}[6]{r}{0.3 \linewidth}
	\vspace{-4mm}
\begin{tabular}{|l|l|} 
	\toprule
	$P$ & Number of workers \\
	$N$ & Number of neurons \\
	$L$ & Number of layers \\
	$\vect{W}$ & Weight matrix \\
	$\vect{b} $ & Bias \\
	$\vect{M}$ & Mask matrix \\
	\bottomrule 
\end{tabular}
\end{wrapfigure}
Bold lowercase letters represent vectors and the $i$-th element of the vector $\vect{x}$ is denoted as $x_i$. Matrices are denoted by bold capital letters such as $\vect{X}$, with the $(i,j)$-th element represented by $X_{i,j}$ or $[\vect{X}]_{i,j}$. 
$\vect{A}\odot\vect{B}$ is the Hadamard (element-wise) product of $\vect{A}$ and $\vect{B}$. $\|\vect{X}\|_F$ is the Frobenius norm of $\vect{X}$, $\|\vect{x}\|_2$ and $\|\vect{x}\|_0$ are the $\ell_2$ and $\ell_0$ norms of $\vect{x}$, respectively.
$\vect{1}$ is a vector or matrix of all ones, whose size would be clear from the context.

\section{Problem Statement and our Approach}
\label{sec:problem_statement}
Consider the problem of parallel distributed implementation of a trained deep neural network over $P$ processing units (hereafter referred to as workers), where the deep model is divided into $P$ sub-models, each of which is executed by a worker. As managing the synchronization and data transfer among workers degrades the efficiency of the parallel implementation (e.g., higher latency), it is crucial to reduce the communication among workers. 
The communication is needed between the workers when the input of a neuron in a sub-model is from the output of a neuron belonging to a different sub-model which resides in another worker. These co-dependencies can lead to significant delays in computation.

For the sake of simplicity in presentations and analysis, here, we mainly focus on feedforward deep models, specifically fully-connected layers. Note that the convolution layer can be represented as a special case of a fully connected layer. \footnote{Recall that the convolution $\vect{h}*\vect{x}$ can be represented as $\vect{W}^\Tr\vect{x}$ for a circulant matrix $\vect{W}$ constructed from $\vect{h}$.} For more details and the extensions of our approach to other complex architectures, please refer to the supplementary document.

Consider an arbitrary neural network with $L$ layers and parameters $\{\vect{\theta}^{(l)}\}_{l=1}^L$, where $\vect{\theta}^{(l)}=\{\vect{W}^{(l)},\vect{b}^{(l)}\}$ is the parameters of the $l$-th layer. Let $\vect{x}^{(l)}$ be the input signal to the $l$-th layer. Then, the output of the layer (input to the next layer) would be given by
\begin{equation}
	\vect{y}^{(l)} = (\vect{W}^{(l)})^\Tr\vect{x}^{(l)}+\vect{b}^{(l)},\quad\quad\vect{x}^{(l+1)}=\sigma(\vect{y}^{(l)}),
\end{equation}
where $\sigma(\cdot)$ is the activation function.

\begin{wrapfigure}[15]{r}{0.5\textwidth}
	\vspace{-6mm}
	\centering
	\begin{overpic}[width=0.9\linewidth]{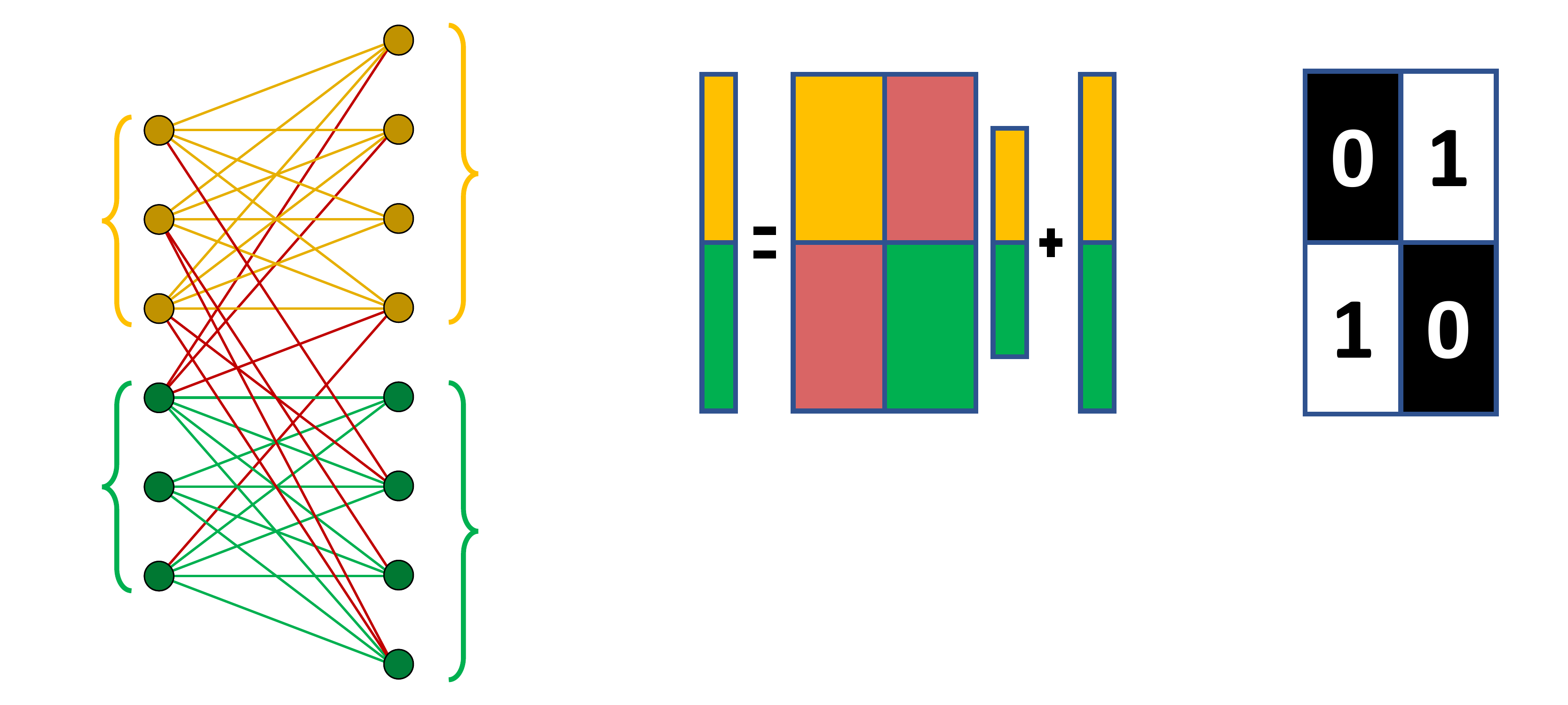}
		\put (0, 30) {\scalebox{.9}{$\vect{x}_1$}} 
		\put (0, 13) {\scalebox{.9}{$\vect{x}_2$}} 
		\put (32, 33) {\scalebox{.9}{$\vect{y}_1$}} 
		\put (32, 10) {\scalebox{.9}{$\vect{y}_2$}} 
		\put (45, 10) {\scalebox{.8}{${\color{darkyellow}\vect{y}_1}={\color{darkyellow}\vect{W}_{11}^\Tr\vect{x}_1+\vect{b}_1}+{\color{darkred}\vect{W}_{12}^\Tr\vect{x}_2}$}} 
		\put (45, 2) {\scalebox{.8}{${\color{darkgreen}\vect{y}_2}={\color{darkgreen}\vect{W}_{22}^\Tr\vect{x}_2+\vect{b}_2}+{\color{darkred}\vect{W}_{21}^\Tr\vect{x}_1}$}} 
	\end{overpic}
	\caption{Communication between workers in parallel execution of a model over two workers. The intra-worker computations are denoted by \textcolor{darkyellow}{yellow} and \textcolor{darkgreen}{green} connections, while required communication between the workers are denoted by \textcolor{darkred}{red} edges. The binary mask matrix (right image) can be used to determine the edges between the two workers.}
	\label{fig:problem_formulation}
	\vspace{-5mm}
\end{wrapfigure}
To analyze the bottlenecks, consider an arbitrary layer with input $\vect{x}$, and parameters $\vect{W}$ and $\vect{b}$ (Fig.~\ref{fig:problem_formulation}). 
Hence, $\vect{y}=\vect{W}^\Tr\vect{x}+\vect{b}$ would be the input signal to the neurons of the layer.
Suppose that $\vect{x}_k$ and $\vect{y}_k$ are subsets of the signals that are processed by the $k$-th worker. Without loss of generality, we assume that the neurons are ordered such that the $k$-th block of consecutive neurons belongs to the $k$-th sub-model, i.e., $\vect{x}=[\vect{x}_1;\vect{x}_2;\ldots;\vect{x}_P]$. 
By partitioning $\vect{W}$ and $\vect{b}$ accordingly, we observe that
\begin{equation}
	\vect{y}_k = (\vect{W}_{k,k}^\Tr\vect{x}_k + \vect{b}_k) + {\color{darkred}(\sum_{l\neq k} \vect{W}_{k,l}^\Tr\vect{x}_l)}.
\end{equation}
Note that the first term can be computed at the $k$-th worker independent of the others, whereas computing the second term requires synchronization and communication from the other workers. Hence, to reduce the dependency among workers and the communication cost, we consider minimizing the number of non-zero elements in $\vect{W}_{k,l}$, for $l\neq k$. 

By defining an appropriate binary mask $\vect{M}$ (Fig.~\ref{fig:problem_formulation} (right)), the connections between sub-models  can be determined by the non-zero elements of $\vect{M}\odot\vect{W}$.
In general, if $\iota_k$ and $o_k$ are the number of input and output neurons assigned to the $k$-th worker, then 
$\vect{M}$ is an anti-diagonal block matrix, given by
\begin{equation*}
	\vect{M}=1-\diag{\vect{1}_{\iota_{_1}\times o_{_1}},\ldots,\vect{1}_{\iota_{_P}\times o_{_P}}}.
\end{equation*}

\begin{remark}
	Note that the bias $\vect{b}$ does not contribute to the communication between workers and can be safely ignored in computing the cost. 
	Further, $\|\vect{M}\odot\vect{W}\|_0$ can be viewed as the number of edges between sub-models, and be used as an approximation to the latency caused by the communication and synchronization among workers. 
	Similarly, by defining an appropriate binary mask $\vect{M}_{ij}$, we can find the edges from worker $j$ to $i$ from the non-zero entries of $\vect{V}_{ij}:=\vect{M}_{ij}\odot\vect{W}$. Depending on the communication protocol among workers, the number of non-zero edges, number of non-zero rows, or number of non-zero columns of $\vect{V}_{ij}$ can be interpreted as a measure of latency due to the communication from worker $j$ to $i$. 
	For the sake of simplicity, in this work, we consider $\|\vect{M}\odot\vect{W}\|_0$ as a measure of total communication latency. However, the extensions of our proposed approach to other cases is straightforward.
\end{remark}

To reduce the communication, one may attempt to reduce the number of cross-edges among sub-models. However, as we observed in our experiments, generally there are many \emph{important} connections between neurons from different sub-models, and removing those connections can severely affect the performance of the neural network. Hence, it is important to have such neurons in the same sub-model. On the other hand, the problem of neuron assignment to the workers is combinatorial and discrete with complexity $\BigO(P^N)$ for a layer with $N$ neurons and $P$ workers. Hence, enumerating all possibilities or using ordinary optimization techniques as well as genetic algorithms or simulated annealing would fail due to the complex nature of interactions among neurons in a deep neural network. 
Based on the above observations, we devise \emph{RePurpose}, a layer-wise neural network restructuring and pruning for efficient parallel implementation. The gist of the idea is as follows; 

\vspace{1mm}
\centerline{\fbox{
\begin{minipage}{0.96\linewidth}
The neurons of the input layer are assigned to the sub-models based on each worker's computational power and/or structure of the input data. For example, in a sensor network, it is dictated by the input of each sensor. 
Next, we restructure and adjust the neural network, sequentially one layer at a time. 
For the $l$-th layer, the assignments of the neurons in layer $l-1$ are assumed to be fixed and known from the previous steps. The neurons in layer $l$ are rearranged and assigned to each sub-model, and the parameters of the layer are pruned and fine-tuned, 
such that ($i$) the performance of the modified neural network is close to the original one, and ($ii$) the communication between the  sub-models (measured by the number of edges connecting neurons from different sub-models) is minimized. 
\end{minipage}
}}

\section{RePurpose: Restructuring and Pruning Deep Models}
\label{sec:repurpose}
\begin{wrapfigure}[11]{4}{0.45\textwidth}
\vspace{-5mm}
	\centering
	\includegraphics[width=\linewidth]{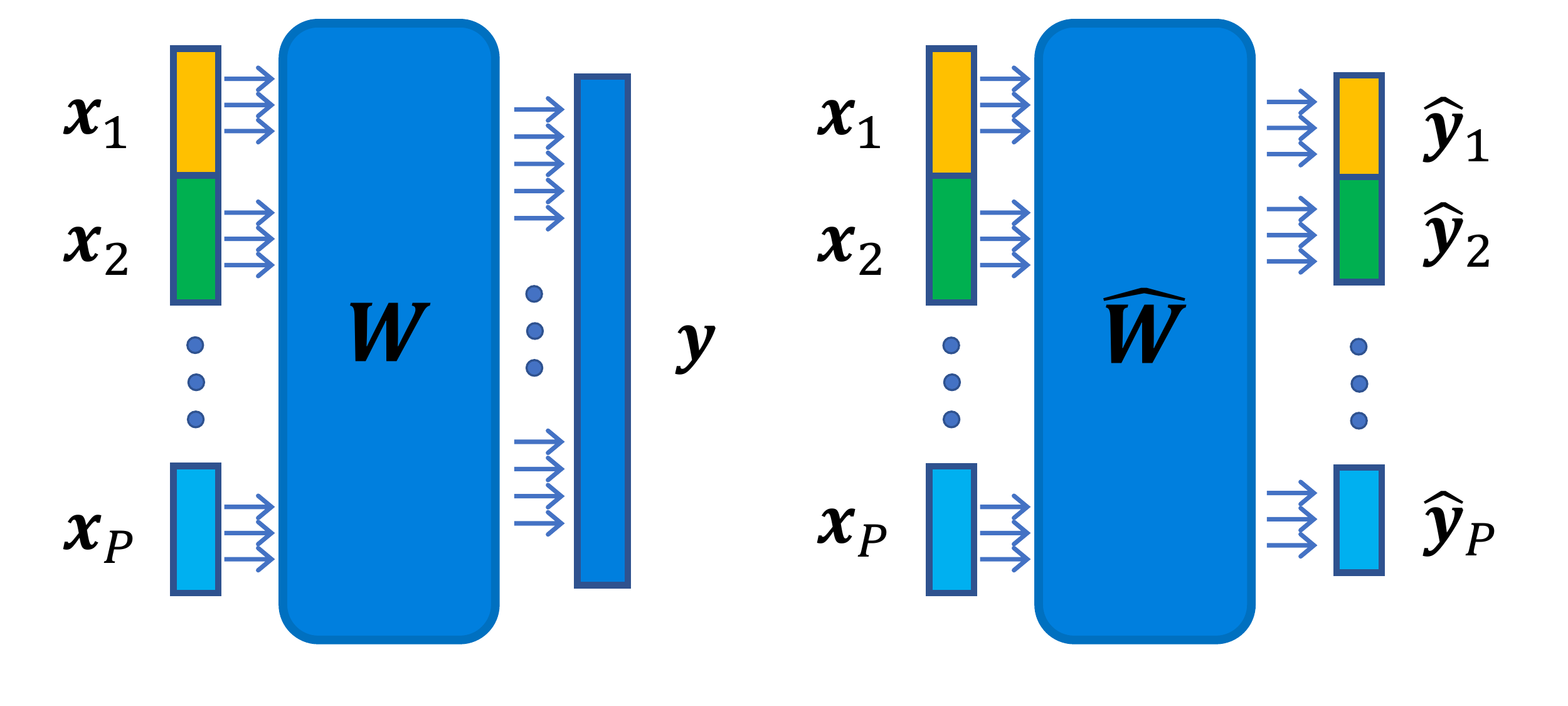}
	\vspace{-5mm}
\caption{Rearranging neurons of a layer and adjusting parameters such that the $k$-th block of signals, $\whvect{y}_k$, is processed at the $k$-th worker.}
\label{fig:neuron_rearrangement}
\vspace{-5mm}	
\end{wrapfigure}
Consider the $l$-th layer of neural network and assume that the neurons in the previous layers have already been partitioned and rearranged, i.e., the input of the layer is partitioned as $[\vect{x}_1;\ldots;\vect{x}_P]$, where $\vect{x}_k$ is computed at the $k$-th worker. Let $\vect{y}$ and $\vect{W}$ be the signals and parameters of the $l$-th layer in the original model. RePurpose rearranges the neurons such that the $k$-th block of neurons are being assigned to the $k$-th worker (Fig.~\ref{fig:neuron_rearrangement}). Note that the rearrangement of the neurons can be captured via a permutation matrix $\vect{\Pi}$. Hence, if we use the same weights, the effect of neuron-rearrangement can be formulated as $\whvect{y}=\vect{\Pi y}$ and $\whvect{W}=\vect{W\Pi}^\Tr$, and the number of cross-edges between workers would be $\|\vect{M}\odot\whvect{W}\|_0$. To further reduce the communication between workers, RePurpose not only rearranges the neurons, but it also prunes and adjusts $\whvect{W}$. 
Hence, the optimization problem for RePurpose is formulated as 
\begin{equation}
	\min_{\whvect{W},\vect{\Pi}} \|\vect{M}\odot\whvect{W}\|_0\quad\quad \subjto~\|\whvect{W}-\vect{W\Pi}^\Tr\|_F^2\leq \epsilon, 
	\label{eqn:RePurposeP-OPT}
\end{equation}
where $\epsilon$ is a parameter controlling the closeness of the parameters. Directly solving \eqref{eqn:RePurposeP-OPT} is infeasible as it is (mixed-)discrete, non-convex, and there are $N!$ different permutation matrices. In the following, we propose an alternative and efficient approach to solve \eqref{eqn:RePurposeP-OPT}.

Recall that if neuron $i$ is assigned to worker $j$, the signal at that neuron can be rewritten as
$\hat{y}_i = b_i + \whvect{w}_i^\Tr\vect{x} = b_i + \whvect{w}_{ij}^\Tr\vect{x}_j + \sum_{k\neq j} \whvect{w}_{ik}^\Tr\vect{x}_k$, 
where $\whvect{w}_i$ is the $i$-th column of $\whvect{W}$, and $\whvect{w}_{ik}$ is the $k$-th block of $\whvect{w}_i$ corresponding to $\vect{x}_k$. Hence, the communication cost from other workers to worker $j$ would be $\|\whvect{w}_{i,\backslash j}\|_0:=\sum_{k\neq j} \|\whvect{w}_{ik}\|_0$. By incorporating an additional optional cost to encourage the total sparsity of the parameters, $\|\whvect{w}_i\|_0$, the cost of assigning neuron $i$ to worker $j$ would be
\begin{equation}
	c_{ji} = \min_{\whvect{w}_i} \|\vect{w}_i-\whvect{w}_i\|_2^2 + \eta_1 \|\whvect{w}_i\|_0 + \eta_2 \|\whvect{w}_{i,\backslash j}\|_0,
	\label{eqn:PRP_param_cost}
\end{equation}
where $\eta_1$ and $\eta_2$ control the trade-off between the error, sparsity, and cross-communication.

\begin{lemma} \label{lem:PRP_optimum_weight}
	The solution of \eqref{eqn:PRP_param_cost} is given by element-wise hard-thresholding $\vect{w}_i$, i.e.,
	\begin{equation}
	[\whvect{w}_i]_n = \left\{
	\begin{array}{ll}
	0 & |\left[\vect{w}_i\right]_n| \leq \sqrt{\eta} \\
	\left[\vect{w}_i\right]_n & \textup{o.w.}
	\end{array}
	\right.
	\label{eqn:PRP_optimum_weight}
	\end{equation}
	where $\eta=\eta_1$ or $\eta_1+\eta_2$, depending on whether neuron $n$ from the previous layer has been assigned to the $j$-th worker or not.
\end{lemma}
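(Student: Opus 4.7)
The plan is to exploit the complete separability of the objective in \eqref{eqn:PRP_param_cost} across coordinates. First I would observe that $\|\vect{w}_i-\whvect{w}_i\|_2^2 = \sum_n \bigl([\vect{w}_i]_n - [\whvect{w}_i]_n\bigr)^2$ is a sum of per-coordinate quadratics, and that both $\ell_0$ terms decompose as sums of indicators over individual coordinates; by definition, $\|\whvect{w}_{i,\backslash j}\|_0 = \sum_{k\neq j}\|\whvect{w}_{ik}\|_0$ counts only those coordinates whose corresponding neuron in the previous layer is not assigned to worker $j$. Consequently the minimization over $\whvect{w}_i$ splits into $N$ independent scalar subproblems, one per coordinate, and it suffices to solve each of these.

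Next I would reduce to the scalar problem explicitly. Fix an index $n$, set $w := [\vect{w}_i]_n$ and $\hat{w} := [\whvect{w}_i]_n$, and let $\eta := \eta_1$ if the $n$-th previous-layer neuron is assigned to worker $j$, and $\eta := \eta_1 + \eta_2$ otherwise; these two cases precisely capture how the $\eta_1$ and $\eta_2$ penalties combine at coordinate $n$. The per-coordinate subproblem then reads
\begin{equation*}
	\min_{\hat{w}\in\SetR}~ (w - \hat{w})^2 + \eta\,\Ind{\hat{w}\neq 0}.
\end{equation*}
Splitting on the indicator yields exactly two candidate optima: $\hat{w} = 0$ with objective value $w^2$; or $\hat{w}\neq 0$, in which case the remaining quadratic is uniquely minimized at $\hat{w} = w$, giving objective value $\eta$. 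Comparing the two, $\hat{w} = 0$ is optimal exactly when $w^2 \leq \eta$, i.e., $|w| \leq \sqrt{\eta}$, and $\hat{w} = w$ otherwise, which is precisely \eqref{eqn:PRP_optimum_weight}.

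I do not anticipate any substantive obstacle: the only bookkeeping subtlety is to verify that the two $\ell_0$ penalties combine into a single effective per-coordinate weight $\eta\in\{\eta_1,\eta_1+\eta_2\}$ dictated by the assignment of the previous-layer neuron, which follows immediately from the block definition of $\whvect{w}_{i,\backslash j}$. Once that reduction is in hand, the remainder is the classical scalar hard-thresholding calculation.
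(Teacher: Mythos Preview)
Your proposal is correct and follows essentially the same route as the paper: exploit coordinate-wise separability, absorb the two $\ell_0$ penalties into a single per-coordinate weight $\eta\in\{\eta_1,\eta_1+\eta_2\}$ according to whether the index lies in the $j$-th block, and then solve the resulting scalar problem by comparing the cost $w^2$ at $\hat w=0$ against the cost $\eta$ at $\hat w=w$. The only cosmetic difference is that the paper writes the split explicitly over the index set $\Omega$ of the $j$-th block, whereas you package it into the definition of $\eta$ up front.
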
 

\begin{wrapfigure}[10]{r}{0.48\textwidth}
	\vspace{-5mm}	
	\begin{algorithm}[H]
		\SetAlgoLined
		\caption{\small Parameter-Space RePurpose}
		\label{alg:RePurposeP}
		\DontPrintSemicolon
		
		\SetKwFunction{FMain}{\Function{RePurpose}}
		\Fn{\FMain{$\vect{W}$, $\{n_k\}_{k=1}^P$, $\eta_1$, $\eta_2$}}
		{
			- Compute the cost matrix $\vect{C}$, where $[\vect{C}]_{j,i}$ is calculated via \eqref{eqn:PRP_param_cost} and \eqref{eqn:PRP_optimum_weight}\;
			- Construct $\wtvect{C}$ by repeating the $k$-th row of $\vect{C}$, $n_k$ times.\;
			- $(I, J) = \Function{Munkres}(\wtvect{C})$\;
			- Define permutation matrix as $\vect{\Pi}_{I,J} = 1$\;
			\vspace{1mm}
			\textbf{Return} $\vect{\Pi}$\;
		}
	\end{algorithm}
\end{wrapfigure}
Restructuring and neuron assignment can be interpreted as selecting elements from the cost matrix $\vect{C}$, whose $(j,i)$-th element is given by \eqref{eqn:PRP_param_cost}, such that ($i$) from row $k$, $n_k$ elements are selected, i.e., $n_k$ neurons are assigned to worker $k$, ($ii$) from each column, only one element is selected, i.e., each neuron can be assigned to only one worker, and ($iii$) the sum of selected elements is minimized, i.e., the total cost of neuron assignment and parameter adjustment is minimum.

Algorithm~\ref{alg:RePurposeP} summarizes the proposed solution, where \Function{Munkres}($\cdot$) uses the Munkres assignment algorithm~\cite{Kuhn1955Hungarian,Munkres1957Algorithms} to find the (row-column) index pairs that minimizes the total sum cost $\sum_n [\wtvect{C}]_{_{I_n,J_n}}$. 

\begin{theorem}
	Algorithm \ref{alg:RePurposeP} finds the optimum solution of 
	\begin{equation}
		\|\whvect{W}-\vect{W\Pi}^\Tr\|_F^2 + \eta_1 \|\whvect{W}\|_0 + \eta_2\|\vect{M}\odot\whvect{W}\|_0,
		\label{eqn:Lagrangian_RePurposeP_OPT}
	\end{equation}
	with time complexity $\BigO(N^3)$, where $N$ is the number of layer's neurons (number of columns of $\vect{W}$).
\end{theorem}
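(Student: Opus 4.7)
The plan is to decouple the objective along columns of $\whvect{W}$, use Lemma~\ref{lem:PRP_optimum_weight} to eliminate the continuous variable, and then recast the remaining discrete problem over permutations as a standard balanced assignment problem that the Hungarian algorithm solves exactly.

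First, I would observe that for any fixed permutation $\vect{\Pi}$, all three terms in \eqref{eqn:Lagrangian_RePurposeP_OPT} separate over the columns of $\whvect{W}$. The mask $\vect{M}$ being block-structured means that the contribution of the $i$-th column to the communication term is exactly $\|\whvect{w}_{i,\backslash j(i)}\|_0$, where $j(i)\in\{1,\dots,P\}$ is determined by which of the $P$ consecutive blocks of sizes $n_1,\dots,n_P$ the index $i$ falls into. Thus the per-column inner minimisation in $\whvect{w}_i$ is exactly the problem analysed in Lemma~\ref{lem:PRP_optimum_weight}, whose minimum value is $c_{j(i),\pi(i)}$, with $\pi(i)$ denoting the original neuron that $\vect{\Pi}$ routes into slot $i$.

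Second, once the weights are eliminated the objective becomes $\sum_i c_{j(i),\pi(i)}$, which depends on $\vect{\Pi}$ only through the bipartite assignment of original neurons to workers, with worker $k$ having capacity $n_k$. I would then argue that the row-duplication step in Algorithm~\ref{alg:RePurposeP} is an exact reduction to a square assignment problem: by repeating row $k$ of $\vect{C}$ exactly $n_k$ times, the resulting $N\times N$ matrix $\wtvect{C}$ has the property that every perfect matching corresponds to a capacity-respecting worker assignment with identical total cost, and conversely every capacity-feasible worker assignment can be realised as such a perfect matching by an arbitrary within-worker slot ordering. Hence the optimum of \eqref{eqn:Lagrangian_RePurposeP_OPT} equals the minimum-cost perfect matching in $\wtvect{C}$, and the pairs $(I_n,J_n)$ returned by Munkres directly yield an optimal $\vect{\Pi}$.

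Finally, the complexity splits into two parts: each of the $PN$ entries of $\vect{C}$ is obtained by a single pass of hard-thresholding $\vect{w}_i$ per Lemma~\ref{lem:PRP_optimum_weight}, giving $O(PN^2)\le O(N^3)$ for assembling $\vect{C}$, while the Hungarian algorithm on the $N\times N$ matrix $\wtvect{C}$ runs in $O(N^3)$ and dominates. The main delicate point will be justifying the row-duplication step carefully: specifically, that identical duplicated rows do not distort optimality despite creating many equivalent within-worker slot orderings, and that every capacity-feasible assignment is realisable as a perfect matching in $\wtvect{C}$. Once this equivalence is made precise, optimality of the Munkres output transfers verbatim to optimality in \eqref{eqn:Lagrangian_RePurposeP_OPT}, and the stated time bound follows.
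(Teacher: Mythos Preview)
Your proposal is correct and follows essentially the same route as the paper's proof: separate the objective column-wise for a fixed $\vect{\Pi}$, invoke Lemma~\ref{lem:PRP_optimum_weight} to reduce each column to the cost $c_{j,\cdot}$, then recognise the residual discrete optimisation as a balanced assignment solvable by the Hungarian algorithm after row duplication. The only cosmetic difference is that the paper first substitutes $\vect{X}=\whvect{W}\vect{\Pi}$ so that columns are indexed by original neurons and the mask is permuted, whereas you index by output slots and track $\pi(i)$ explicitly; both bookkeepings yield the identical sum $\sum c_{j(\cdot),\pi(\cdot)}$ and the same $\BigO(N^3)$ bound.
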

Note that by setting $\eta_1=0$, \eqref{eqn:Lagrangian_RePurposeP_OPT} would be the Lagrangian of \eqref{eqn:RePurposeP-OPT} and choosing appropriate value for $\eta_2$ can lead to the desired error bound $\|\whvect{W}-\vect{W\Pi}^\Tr\|_F^2\leq \epsilon$. 
Finally, it is worth mentioning that the bias term does not contribute to the communication cost and is given by $\whvect{b}=\vect{\Pi b}$.

\begin{remark}
	In model pruning and compression, it is common to retrain the modified model to fine-tune the parameters and improve the accuracy of the model. This extra post-processing is generally referred to as post-training phase or fine-tuning. The same principle can be applied to our proposed algorithm.
\end{remark}

\section{Experiments}
\label{sec:experiments}
To evaluate the performance of the RePurpose algorithm, we consider different neural network architectures and compare the accuracy, communication and wall-clock times w.r.t. \emph{naive} implementation where the input data is communicated to all nodes in the network, so they all have the entire input data, \emph{baseline} which is parallel implementation of the deep model without any modification to the parameters or structures, and \emph{sparse} implementation which sparsifies the parameters to reduce cross-edges between the workers without re-arranging the neurons. We evaluate the accuracy-communication trade-off in different sensor networks, as well as the reduction in total computation time (wall-clock time) in Edge and Data Center platforms.

\subsection{Sensor Network}

\textbf{Setup 1.} Figure~\ref{fig:RePurpose-Spiral}(a) shows a 2 sensors network, sensor $i$ observes location $x_i$ of a target object and each sensor's task is to determine whether the object is in the blue or green region. A simple neural network (Fig.~\ref{fig:RePurpose-Spiral}(b)) is trained at a central node to perform the task with accuracy $94.5\%$. In the naive approach, the sensors exchange their observations ($x_i$'s) and run the inference (NN) independently. Hence, the NN is executed twice throughout the network at the cost of higher computational complexity. Alternatively, we can apply RePurpose to efficiently distribute the inference over the sensors.
We applied RePurpose with $\eta_1=0$, $\eta_2=0.01$ (Fig.~\ref{fig:RePurpose-Spiral}(c)), and $\eta_2=0.1$ (Fig.~\ref{fig:RePurpose-Spiral}(d)). As a result, the number of cross-worker communications reduced significantly to $1.7\%$, $1.5\%$ and $1.6\%$ for $\eta_2=0.01$, and $0.7\%$, $0.1\%$ and $0.3\%$ for $\eta_2=0.1$ for layers 1, 2, and 3, respectively. Specifically, with only $6$ communicated values, the computational complexity at each sensor is reduced by almost a factor of $4$ compared to the naive implementation. 
However, the accuracy of the distributed parallel model, prior to the post-training phase, is reduced to $93.5\%$. By retraining the modified model for few iterations (and imposing the structural constraints found through RePurpose), the accuracy of the fine-tuned model becomes $94.4\%$.

\begin{figure}[!htbp]
	\centering
	\begin{overpic}[width=\linewidth]{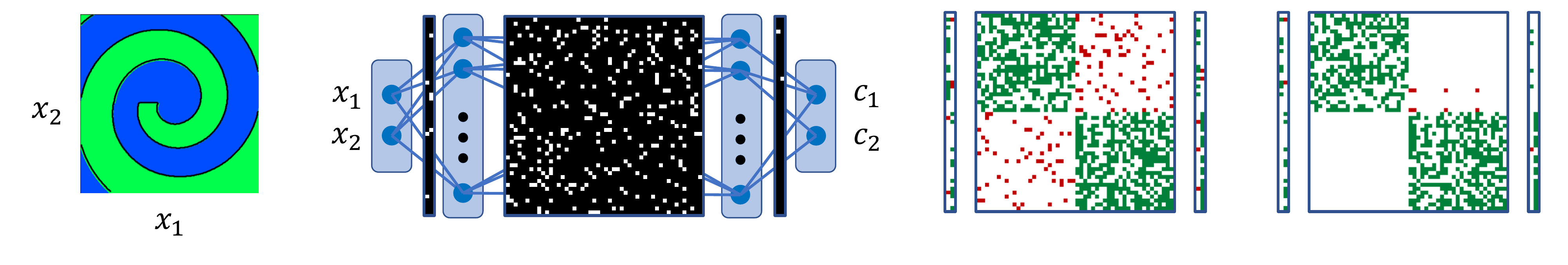}
		\put (0,-2) {\scalebox{.75}{(a) Classification Regions}}
		\put (28,-2) {\scalebox{.75}{(b) Original trained model}} 
		\put (60,-2) {\scalebox{.75}{(c) RePurpose $\eta_2=0.01$}} 
		\put (81,-2) {\scalebox{.75}{(d) RePurpose $\eta_2=0.1$}} 
		\put (26.2,1) {\scalebox{.6}{$\vect{W}_1$}} 
		\put (38,1) {\scalebox{.6}{$\vect{W}_2$}} 
		\put (49,1) {\scalebox{.6}{$\vect{W}_3$}} 
		\put (60,1) {\scalebox{.6}{$\whvect{W}_1$}} 
		\put (68,1) {\scalebox{.6}{$\whvect{W}_2$}} 
		\put (76,1) {\scalebox{.6}{$\whvect{W}_3$}} 
		\put (81,1) {\scalebox{.6}{$\whvect{W}_1$}} 
		\put (89,1) {\scalebox{.6}{$\whvect{W}_2$}} 
		\put (97,1) {\scalebox{.6}{$\whvect{W}_3$}} 
	\end{overpic}
	\vspace{0mm}
	\caption{\textbf{Setup 1.} Distributed inference over a sensor network to classify location of an object. The zero coefficients are represented by empty (white) spaces, inner-worker connection by \textcolor{darkgreen}{green} pixels and cross-worker edges by \textcolor{darkred}{red} pixels in the images. Note that for the illustration purposes, the coefficient matrix of the first layer is transposed.}
	\label{fig:RePurpose-Spiral}
	\vspace{2mm}
	\begin{subfigure}{0.32\linewidth}
		\centering
		\includegraphics[width=\linewidth]{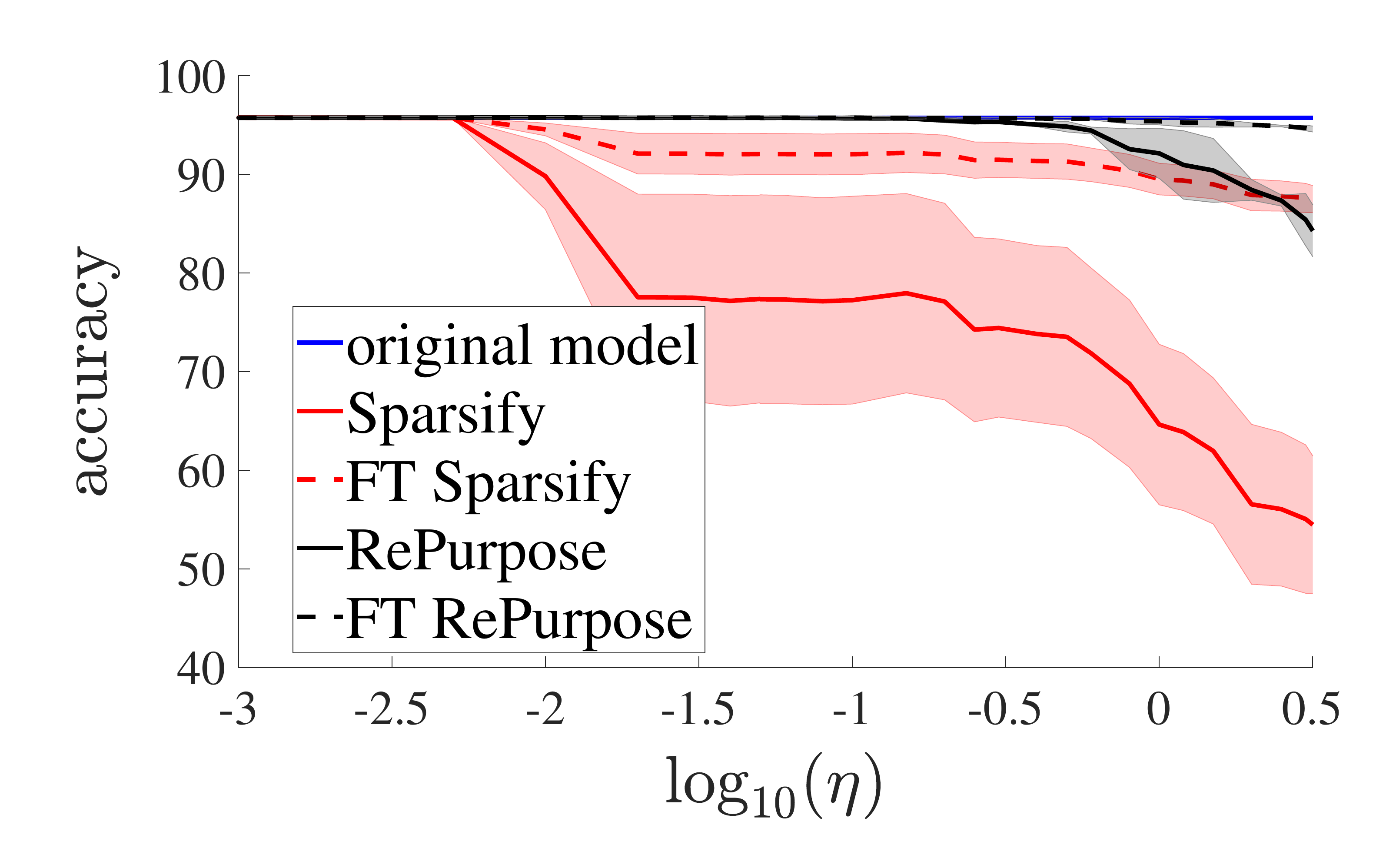}
		\caption{Accuracy vs $\eta_2$}
		\label{fig:MNIST_6Nodes-a}
	\end{subfigure}
	~
	\begin{subfigure}{0.32\linewidth}
		\centering
		\includegraphics[width=\linewidth]{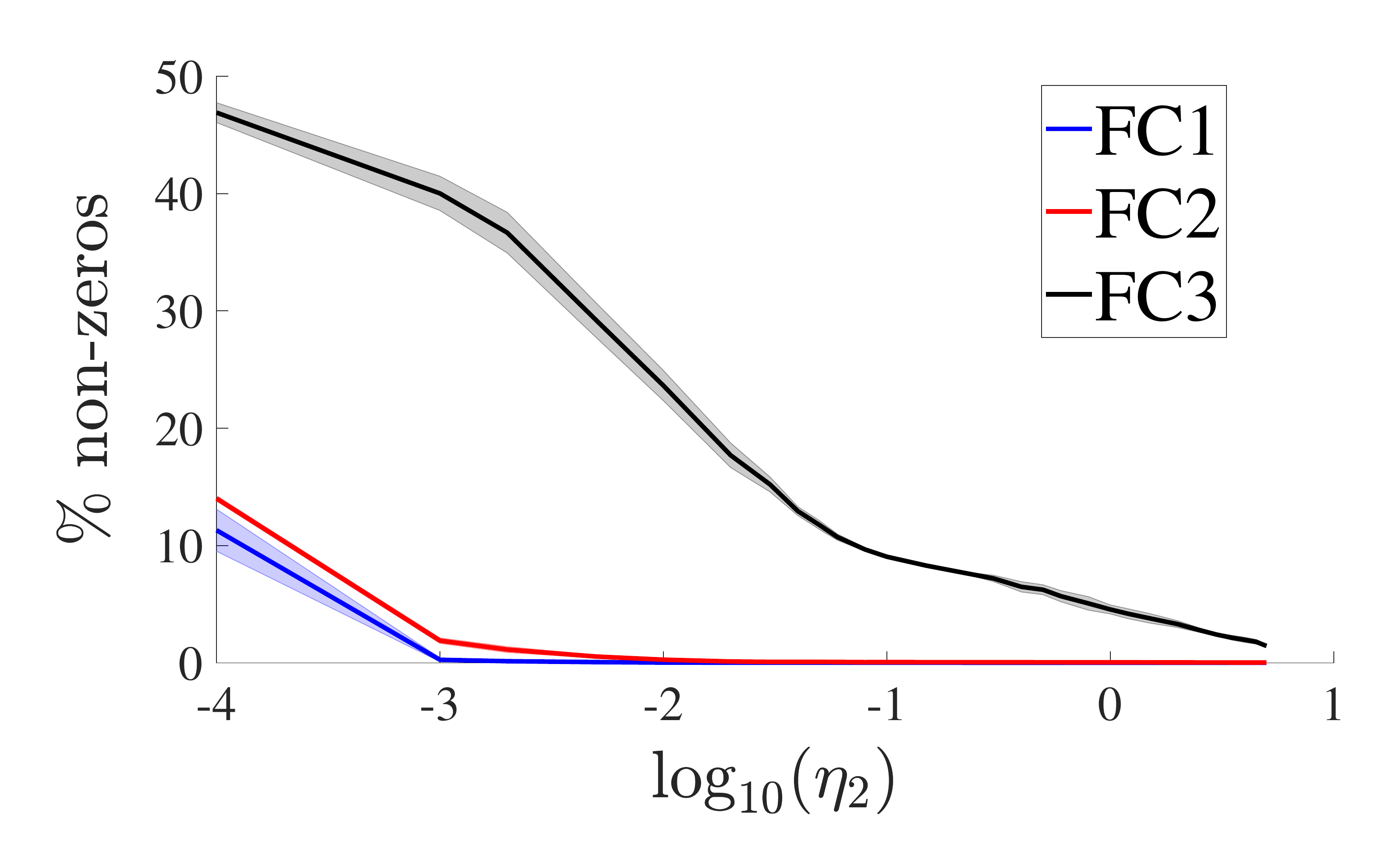}
		\caption{$\%$ of cross-edges vs $\eta_2$}
		\label{fig:MNIST_6Nodes-b}
	\end{subfigure}
	~
	\begin{subfigure}{0.32\linewidth}
		\centering
		\includegraphics[width=\linewidth]{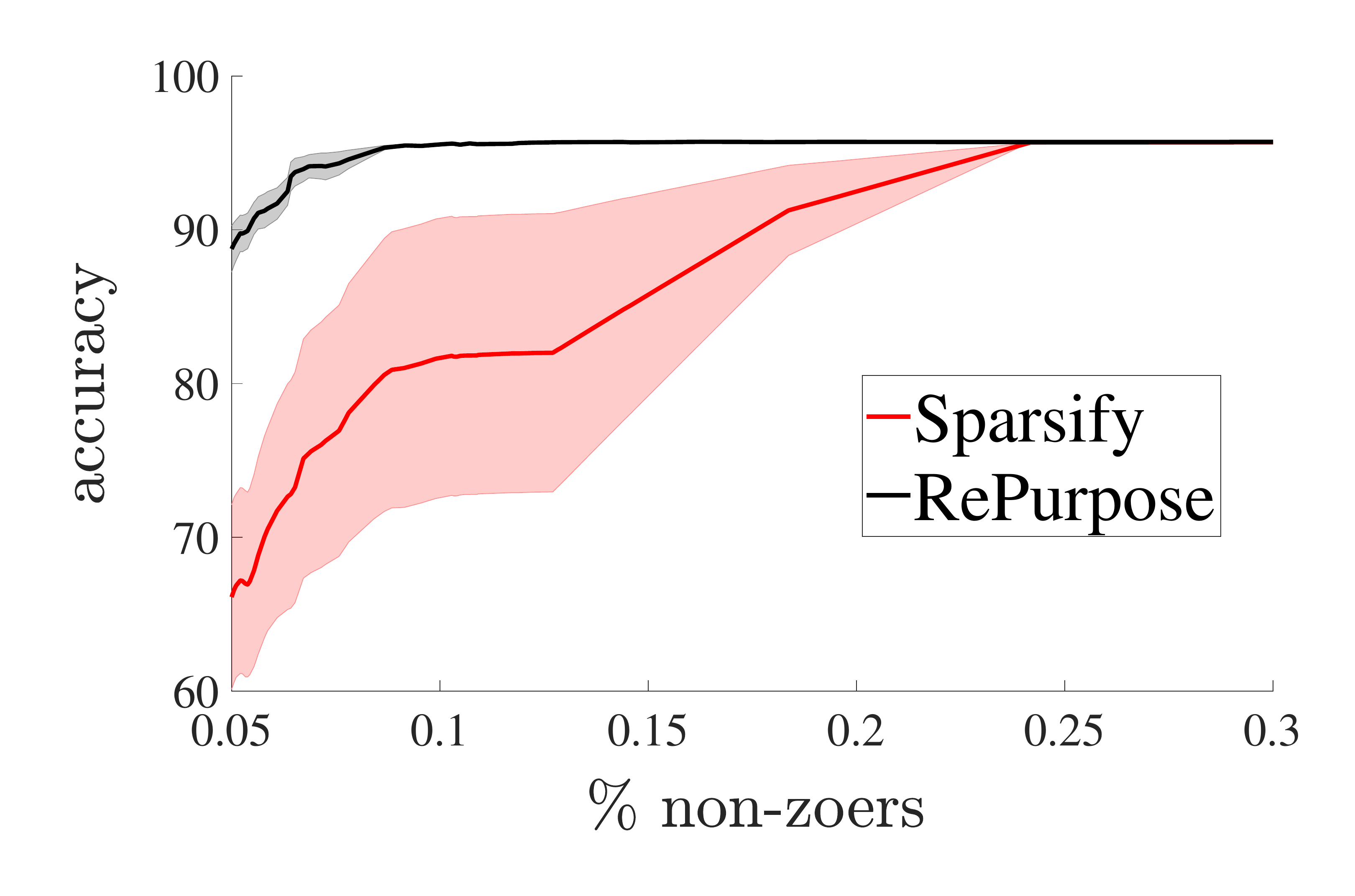}
		\caption{Accuracy vs $\%$ of cross-edges}
		\label{fig:MNIST_6Nodes-c}
	\end{subfigure}
	\vspace{-1mm}
	\caption{RePurpose vs Sparsification, a network with $6$ nodes in \textbf{Setup 2}}
	\label{fig:MNIST_6Nodes}
	\begin{subfigure}{0.31\linewidth}
		\centering
		\includegraphics[width=\linewidth]{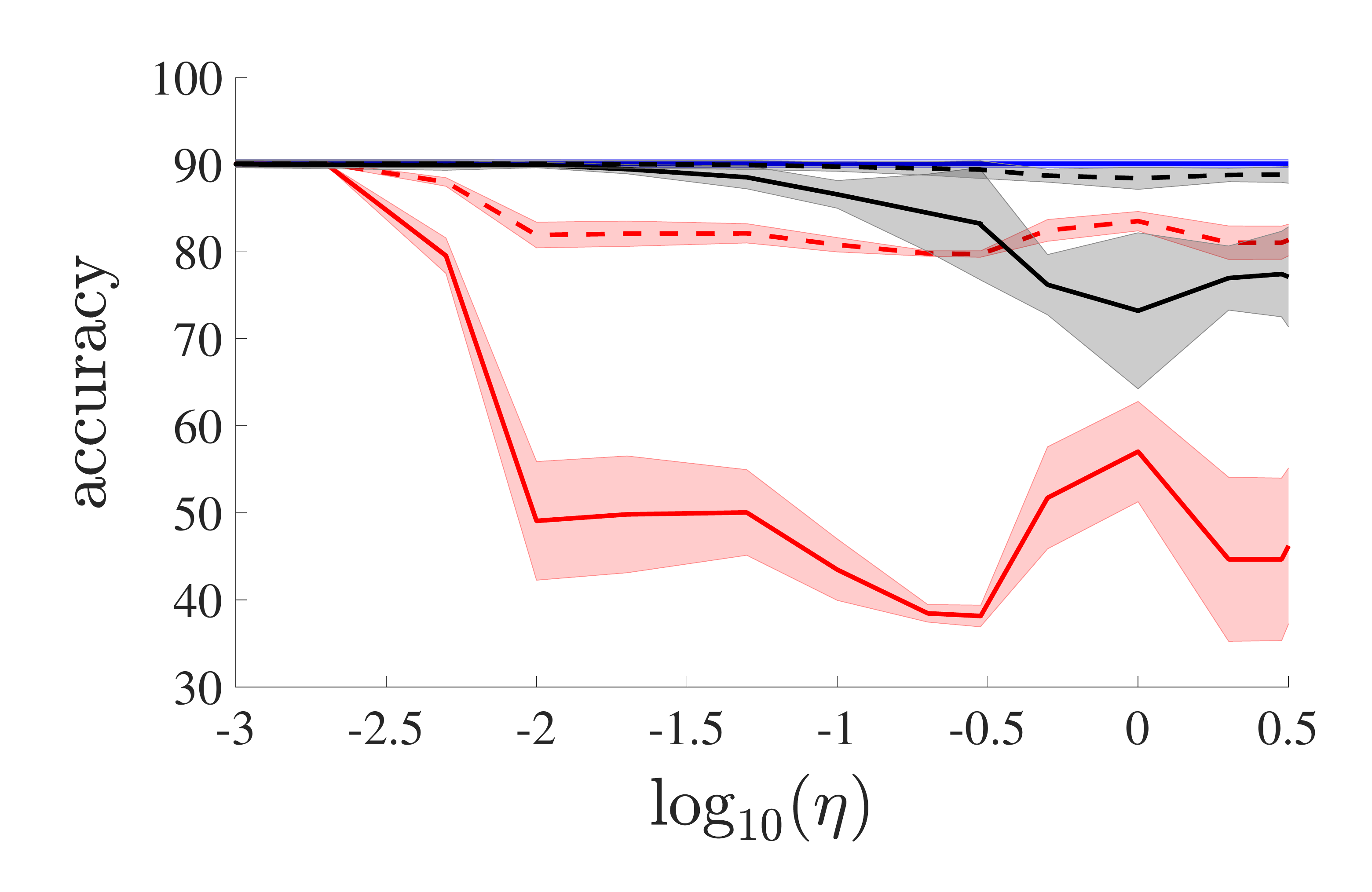}
		\caption{Accuracy vs $\eta_2$}
		\label{fig:CIFAR_2Nodes-a}
	\end{subfigure}
	~
	\begin{subfigure}{0.31\linewidth}
		\centering
		\includegraphics[width=\linewidth]{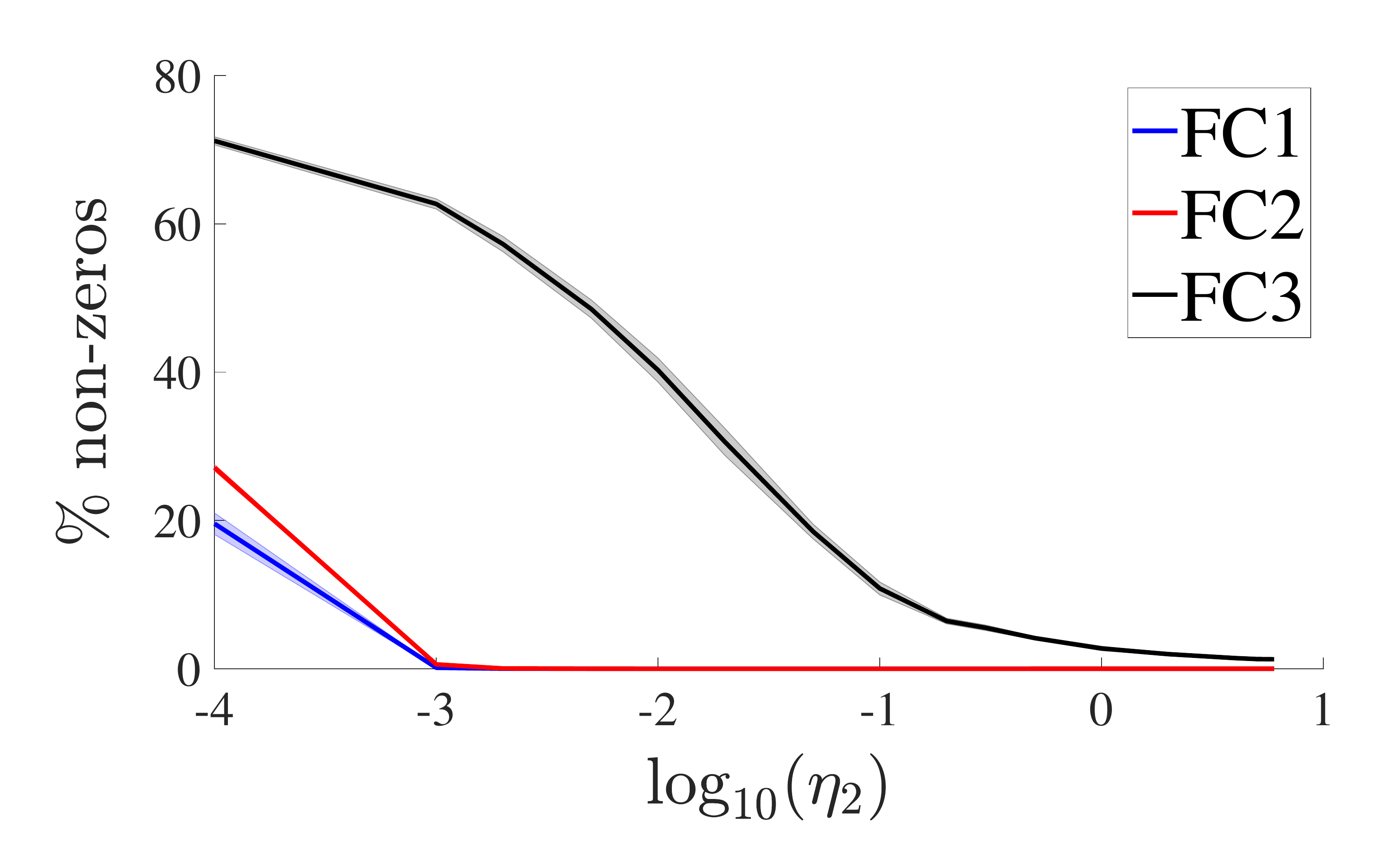}
		\caption{$\%$ of cross-edges vs $\eta_2$}
		\label{fig:CIFAR_2Nodes-b}
	\end{subfigure}
	~
	\begin{subfigure}{0.31\linewidth}
		\centering
		\includegraphics[width=\linewidth]{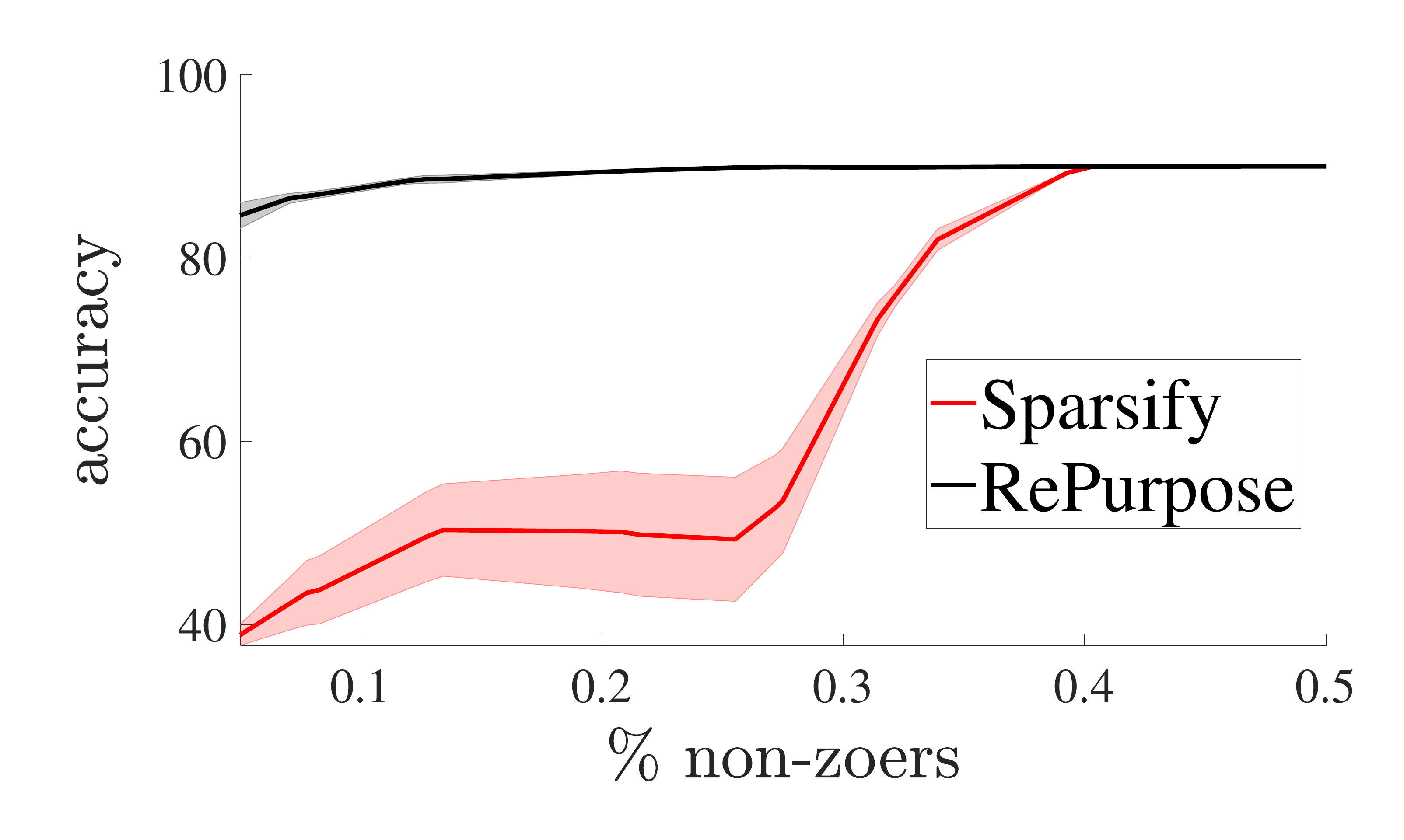}
		\caption{Accuracy vs $\%$ of cross-edges}
		\label{fig:CIFAR_2Nodes-c}
	\end{subfigure}
	\caption{RePurpose vs Sparsification, a network with $2$ nodes in \textbf{Setup 3}}
	\label{fig:CIFAR_2Nodes}
\end{figure}

\begin{wrapfigure}[8]{r}{0.57\linewidth}
	\vspace{-2mm}
	\centering
	\includegraphics[width=\linewidth]{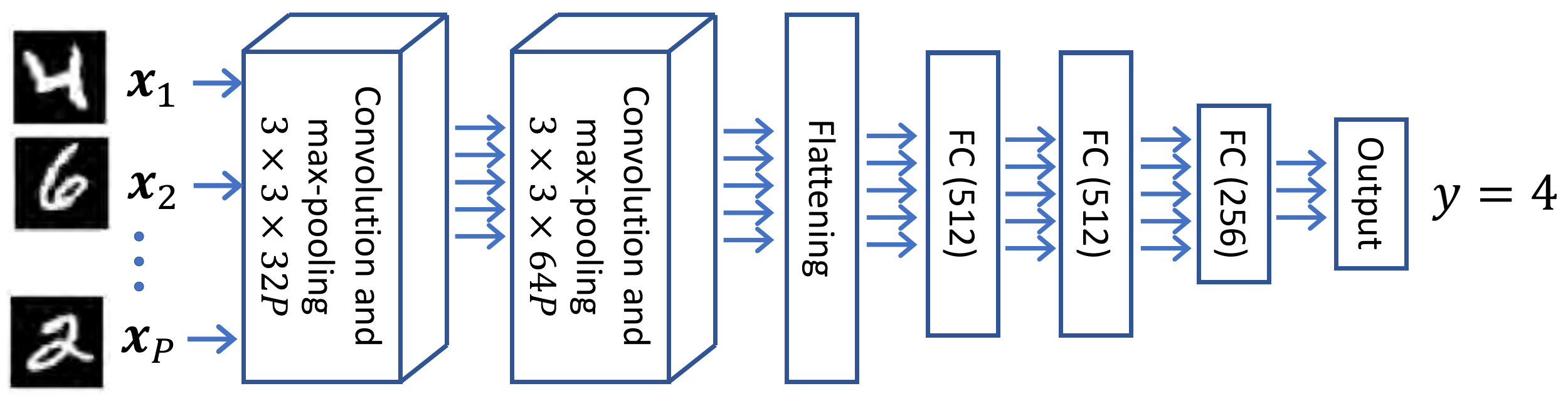}
	\vspace{-3mm}
	\caption{Structure of CNN for \textbf{Setup 2}}
	\label{fig:CNN_Setup2}
\end{wrapfigure}
\textbf{Setup 2.} Next, we consider a network of $P$ sensors where each sensor observes an image of a digit $x_i$ (from MNIST dataset) and the goal is finding the rounded average $\left[\frac{\sum_i x_i}{P}\right]$. We adapted a Lenet-5 like structure~\cite{LeCun1998Gradient} for the neural network which is trained in a central server (Fig.~\ref{fig:CNN_Setup2}), and repeated the experiments several times. Note that one might attempt to classify the digits at each individual sensor and then share the value with other nodes to compute the average. However, in addition to the increased computational complexity at each individual node, it is worth mentioning that if the accuracy of digit recognition is $\rho$, close to $1$, then the final accuracy in computing the average would be approximately $\frac{1+8\rho ^P}{9}$. For example, for a network with $6$ nodes and $\rho=0.98$, the final accuracy would be less than $90\%$. We applied the RePurpose algorithm on the trained model for distributed inference over the sensor network with different communication (cross-worker edges) constraints. Fig.~\ref{fig:MNIST_6Nodes} compares the results of RePurpose with the baseline and direct sparsification, in a network with $P=6$ sensors.

\begin{wrapfigure}[9]{r}{0.6\linewidth}
	\vspace{-2mm}
	\centering
	\includegraphics[width=\linewidth]{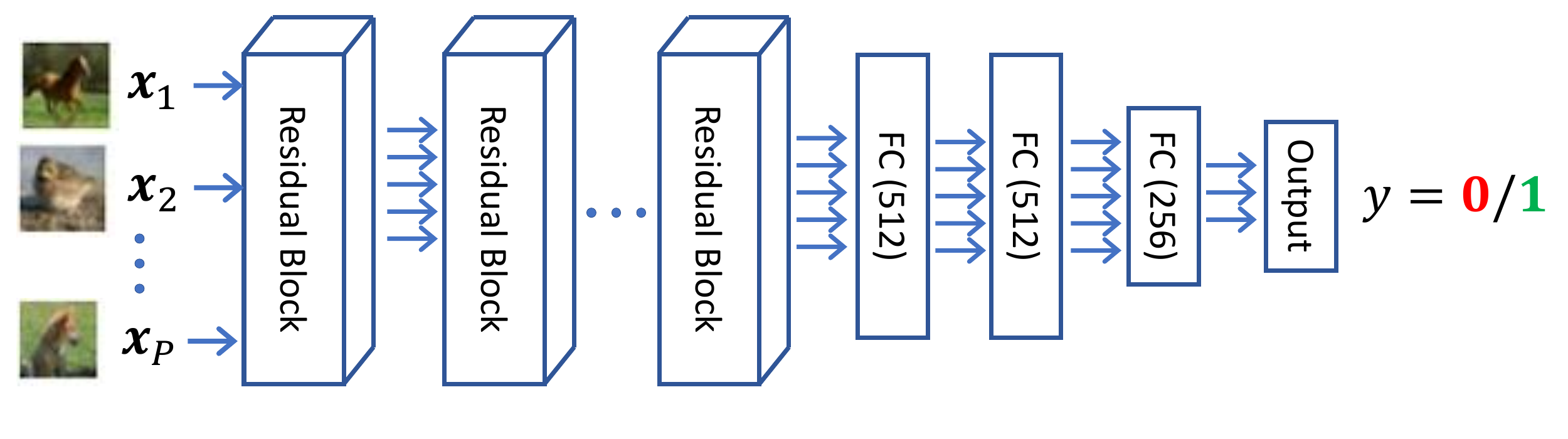}
	\caption{Structure of neural network for \textbf{Setup 3}}
	\label{fig:CNN_Setup3}
\end{wrapfigure}
\textbf{Setup 3.} Next, we consider $P$ sensors (cameras) that observe a scene and detect whether an specific object exists or not. For this purpose, we used a Resnet-like neural network~\cite{He2016Deep} over CIFAR10 and the objective is detecting the presence of a "dog" in any of the images (Fig.~\ref{fig:CNN_Setup3}). 
Fig.~\ref{fig:CIFAR_2Nodes} shows the results of RePurpose, the baseline, and direct sparsification, in a network with $P=2$ sensors.

As seen from figures \ref{fig:MNIST_6Nodes-a} and \ref{fig:CIFAR_2Nodes-a}, RePurpose significantly outperforms sparsification and although its accuracy is dropped for large $\eta_2$, with 1 or 10 epochs of post-training for MNIST and CIFAR10, respectively, ("FT RePurpose" in the figures) it achieves almost the same accuracy as the original model, while direct sparsification fails to provide good accuracy. Moreover, interestingly, RePurpose sparsifies the cross-edges between workers significantly for the hidden layers. The restructured model can achieve the same performance as the original model by using less than $0.0003$ of the cross-edges (i.e., between $10$ to $30$ connections out of more than $100000$ edges between workers). Finally, figures \ref{fig:MNIST_6Nodes-c} and \ref{fig:CIFAR_2Nodes-c} compare the accuracy vs the cross-communication between workers. Clearly, direct sparsification performs well \emph{only} when there are enough number of cross-edges between the workers, while the accuracy of the model obtained by RePurpose does not change for a vast sparsity range.

Finally, it is worth mentioning that in the naive approach to inference over the sensor network, each node has to transmit its observations to other nodes, hence the communication between any two pair of nodes would be $784$ or $1024$ values for \textbf{Setups 2} and $\textbf{3}$, respectively. However, RePurpose can achieve the same accuracy with less than \emph{200 total communicated values across the entire network}.

\subsection{System Evaluations}
\label{sec:eval}

\begin{table}[!t]
\caption{
Target Accelerator Evaluation Platforms 
}
\centering
\begin{tabular}{ccccc} 
\toprule
\textbf{Name} & \textbf{Node Compute} & \textbf{Node Memory} & \textbf{Network Bandwidth} & \textbf{Number of Nodes} \\
\hline
\textbf{Datacenter}  & 125 TOPS & 4GB & 150 GB/s (NVLink) & 1-32\\
\hline
\textbf{Edge}  & 0.5 TOPS & 1GB & 100 MB/s (Ethernet) & 1-32\\
\bottomrule 
\label{table:system_config}
\end{tabular}
\vspace{-5mm}
\end{table}

\textbf{Methodology-} 
We evaluate RePurpose on two distributed accelerator platforms,
described in Tbl.~\ref{table:system_config}, 
simulated using ASTRA-sim~\cite{astrasim}.
ASTRA-sim is an open-source 
distributed Deep Learning platform 
simulator that models cycle-level 
communication behavior in details for any partitioning strategy across multiple interconnected accelerator nodes.
ASTRA-sim takes the compute cycles for each 
layer of the model as an external input, and manages  communication scheduling
similar to communication libraries like 
NVIDIA NCCL~\cite{nccl}.
We obtained compute cycles for the Datacenter configuration from 
a NVIDIA V100 GPU implementation, and for the Edge 
configuration (e.g., sensor network) 
from a separate DNN accelerator simulator~\cite{scalesim}.

We tried to stress the aforementioned platforms under various sized problems to show the efficiency of RePurpose. 
In all models, we assumed a stack of 5 layers with same number of neurons. In our notation, $N$ refers to the number of neurons per layer (or matrix dimensions).
For the datacenter system, $N$ varies from $1K$ to  
$1M$, while for edge system the variation is from $1K$ to $32K$. We also assumed strict ordering between current communication and computation of next layer, meaning that each node begins computation of each layer only when it has all inputs available.

We picked 4 different flavors of RePurpose with 50\%, 75\%, 90\% and 99\% sparsity factor named as RP-50, RP-75, RP-90, and RP-99, respectively. In addition, we changed the number of worker nodes from 1 to 32 for both system configurations.

\begin{wrapfigure}[12]{r}{0.51\textwidth}
	\centering
	\includegraphics[width=\linewidth]{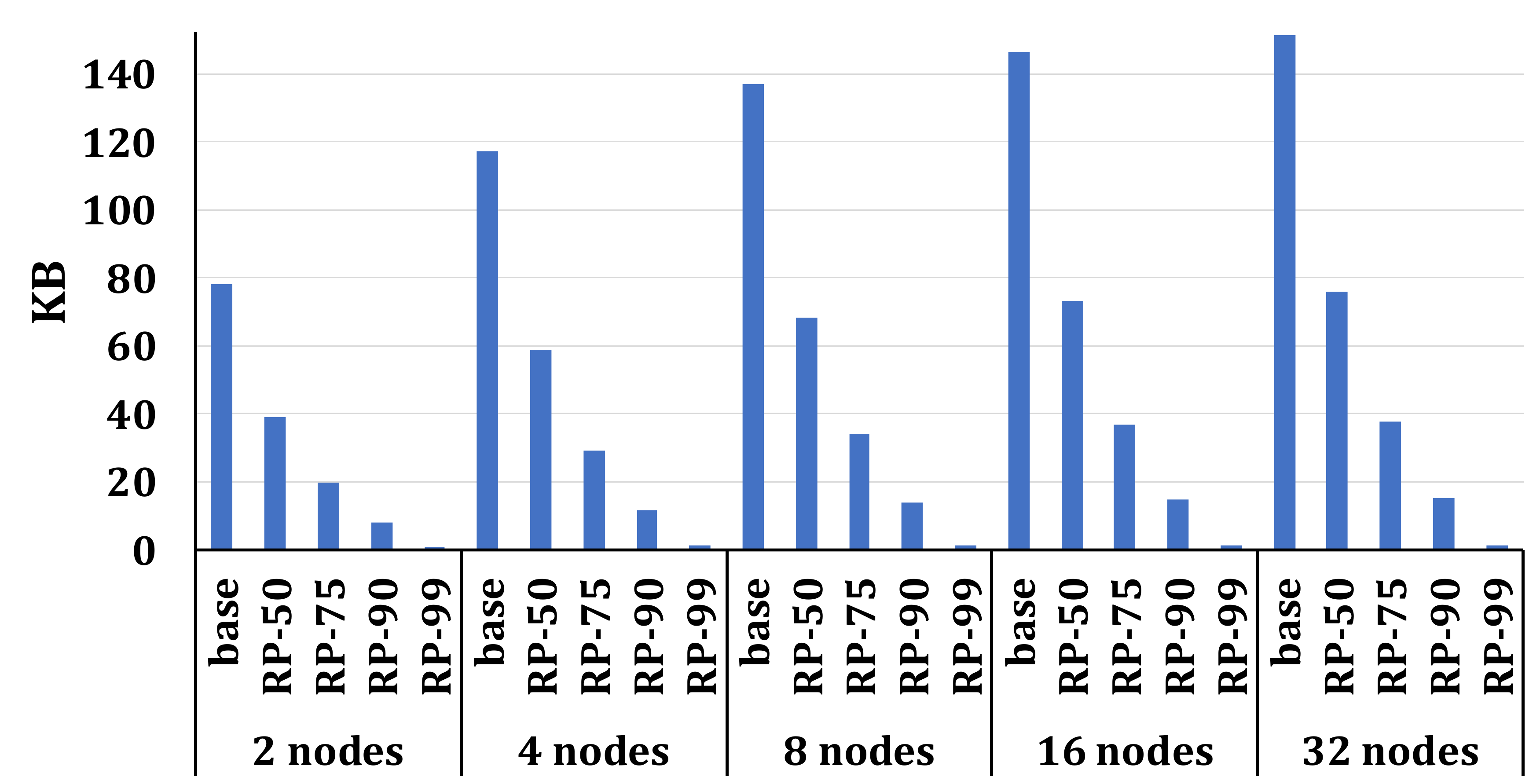}
	\caption{Theoretical amount of data each node needs to send out for $N=8K$.}
	\label{fig:theoriticalcomms}
\end{wrapfigure}
\textbf{Results-} 
Fig.~\ref{fig:theoriticalcomms} shows the total amount of data that each node needs to send out for one input sample for $N=8K$. Clearly, specification has the linear effect on the amount of communicating data. On the other hand, partitioning across more nodes also increases the total communicating data. But the increase in rate diminishes as the number of nodes increases, converges to 2X more data compared to the case of 2 nodes.  

To further investigate the effect of RePurpose in reducing the computation and communication times, Fig.~\ref{fig:detailedResults} shows the simulation results of the communication and computation breakdown for the baseline system and RePurpose for $N=8k$. 
As seen from Fig.~\ref{fig:datacenter_8k}, in a datacenter system, on average and across different number of nodes, RP-50, RP-75, RP-90 and RP-99 achieve 1.7$\times$, 2.76$\times$, 4.77$\times$ and 10.47$\times$ speed-up in computations, respectively. The average improvement for communication ratio is 1.2$\times$, 1.45$\times$, 1.74$\times$ and 1.75$\times$, respectively. The reason for lower improvements of communication time is that due to NVLink's high bandwidth. For $N=8K$, network communication time is mostly network latency limited. Hence, reduction in input size does not correspond to linear reduction in communication time.

Fig.~\ref{fig:edge_8k} shows the similar results but for edge system. Here, due to much lower network bandwidth, the effect of communication is more considerable. On average applying RP-50, RP-75, RP-90 and RP-99 improve computation times by 1.7$\times$, 2.77$\times$, 4.78$\times$ and  11.01$\times$, respectively. This value for communication is 1.2$\times$, 1.38$\times$, 1.82$\times$ and 3.04$\times$ respectively. As the number of nodes grow, the communication gap between the baseline and RePurpose decreases. This is mostly because of the congestion in the network (e.g. switch) that decreases the effect of benefits gained by RePurpose.

\begin{figure*}            
	\begin{subfigure}{0.5\linewidth}
	\setlength{\abovecaptionskip}{0pt}
    \setlength{\belowcaptionskip}{0pt}
		\centering
		\includegraphics[width=.9\linewidth]{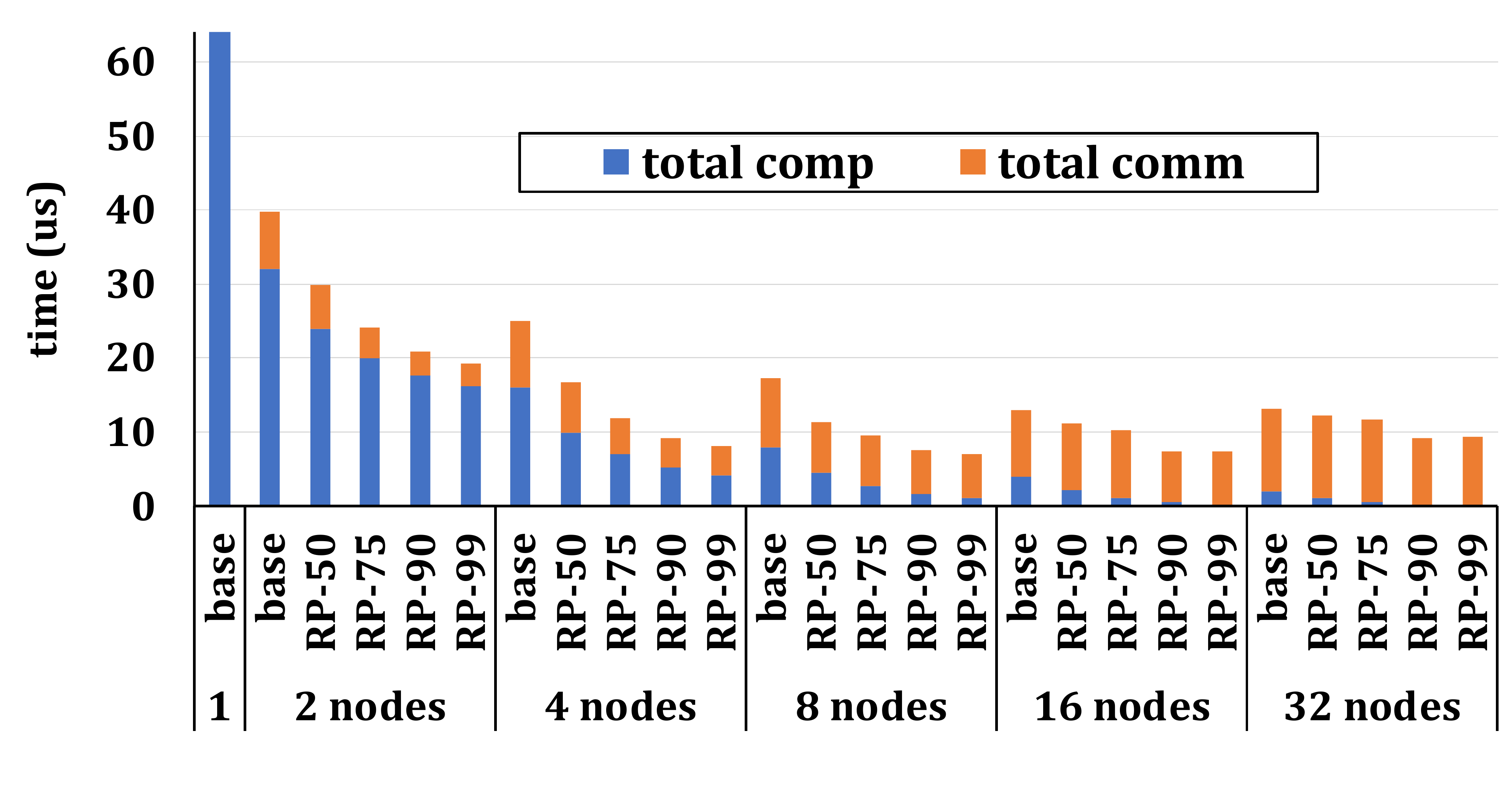}
		\caption{Datacenter Platform results}
		\label{fig:datacenter_8k}
	\end{subfigure}
	\begin{subfigure}{0.5\linewidth}
		\setlength{\abovecaptionskip}{0pt}
    \setlength{\belowcaptionskip}{0pt}
		\centering
		\includegraphics[width=.9\linewidth]{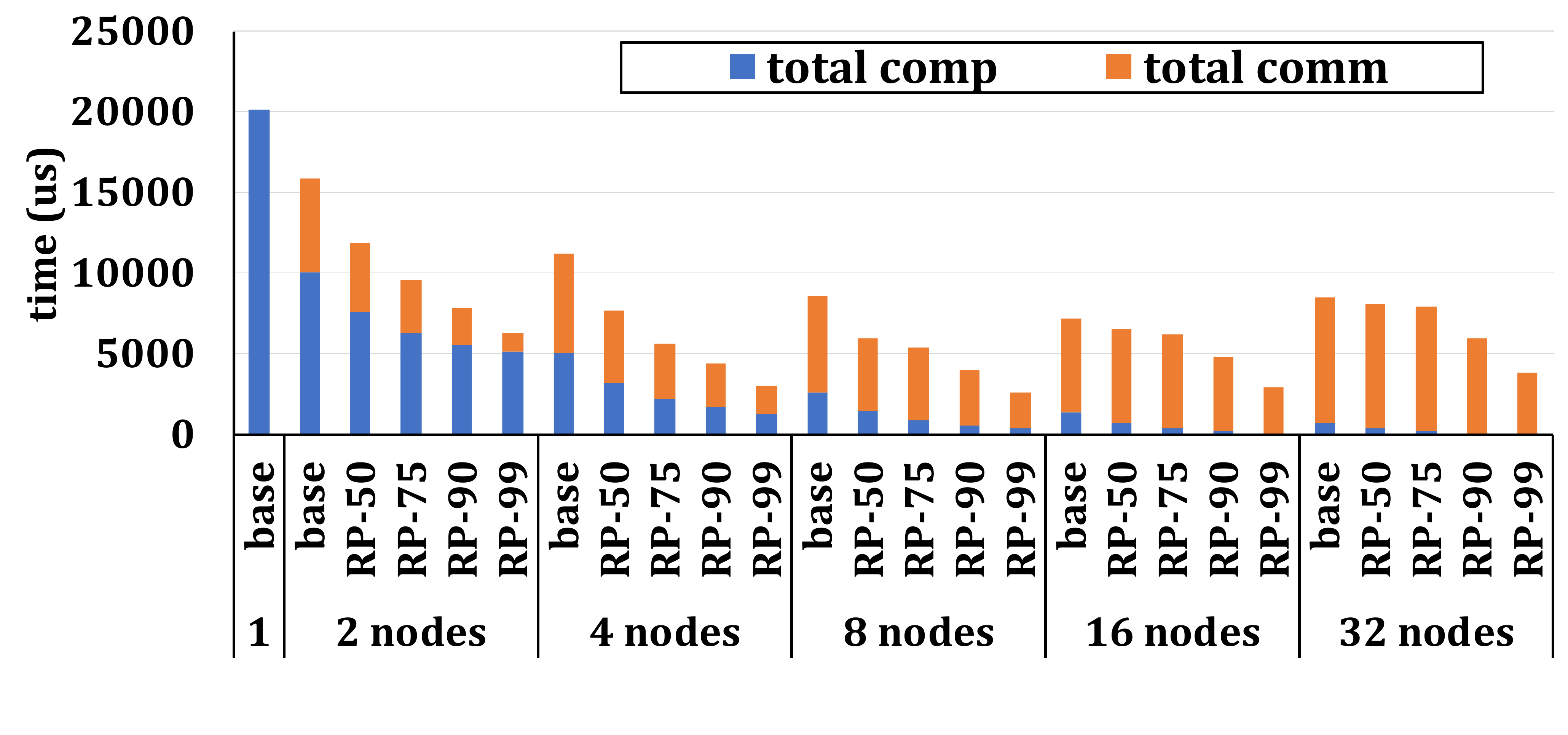}
		\caption{Edge Platform results}
		\label{fig:edge_8k}
	\end{subfigure}
	\caption{Communication and computation breakdown across different systems and $N=8K$}
	\label{fig:detailedResults}
\vspace{3mm}
	\begin{subfigure}{0.5\linewidth}
	\setlength{\abovecaptionskip}{-3pt}
    \setlength{\belowcaptionskip}{0pt}
		\centering
		\includegraphics[width=0.9\linewidth]{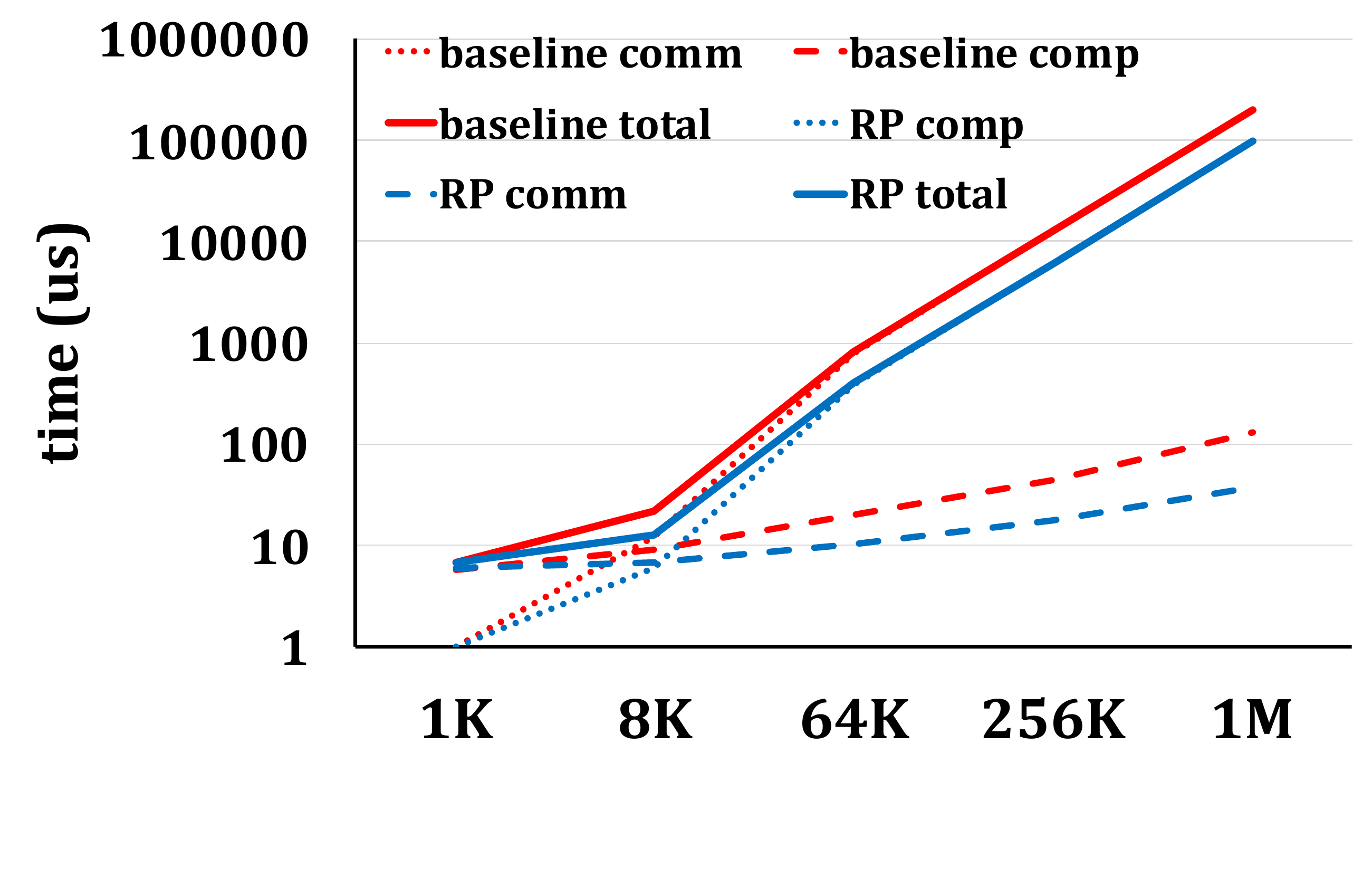}
		\vspace{-4mm}
		\caption{Datacenter Platform results}
		\label{fig:datacenterTrend}
	\end{subfigure}
	\begin{subfigure}{0.5\linewidth}
		\setlength{\abovecaptionskip}{-3pt}
    \setlength{\belowcaptionskip}{0pt}
		\centering
		\includegraphics[width=0.9\linewidth]{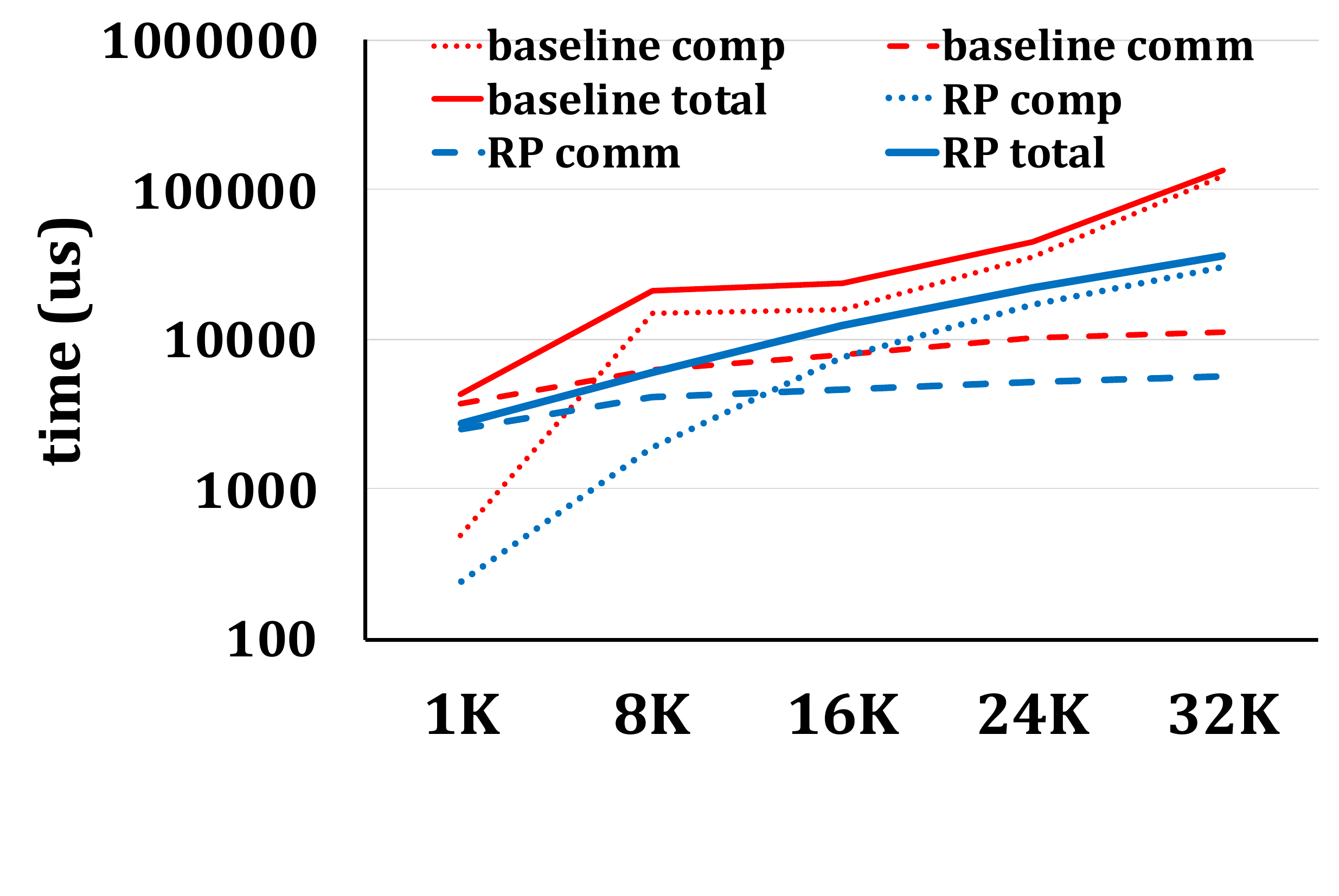}
		\vspace{-4mm}
		\caption{Edge Platform results}
		\label{fig:edgeTrend}
	\end{subfigure}
	\caption{The effect of communication vs. computation times as the model size $N$ grows}
	\label{fig:trend}
	\vspace{-5mm}
\end{figure*}

Fig.~\ref{fig:trend} shows how communication, computation and total times change as the the number of neurons grows. For each network size, computation and communication times are averaged across different sparsity factors and node counts. For datacenter system (Fig\ref{fig:datacenterTrend}), computation is the dominant factor. This is expected since the computation grows as $O(N^2)$ while communication increases as $O(N)$. Since the network band-width is very high in datacenter, the effect of communication is negligible. In general, the total time ratio increase from 1.01$\times$ in $N=1K$ to 2.06$\times$ in $N=1M$.
On the other hand, communication remains a considerable factor in the edge systems (Fig.~\ref{fig:edgeTrend}) due to: ($i$) low network bandwidth, and ($ii$) lower dimensions of workloads on edge systems. The total time improvement for edge system is $1.55\times$ for $N=1K$ and it increases to $3.8\times$ for $N=32K$.


\section{Conclusion}
\label{sec:conclusion}
In this paper, we considered the problem of efficient parallel distributed inference of an already trained deep model over a cluster of processing units or a sensor network. Required communication and synchronization among processing units or network nodes (i.e., workers) can adversely affect the computation time. Moreover, in the wireless sensor networks, it may significantly increase the power consumption due to the transmission of large amount of data.
We claimed that traditional approaches to prune or compress the deep models fail to consider the constraints imposed in such distributed inference systems. 
To overcome the shortcomings of the existing methods, we devised \emph{RePurpose}, a framework to restructure the deep model by rearranging the neurons, optimum assignment of neurons to the workers, and then pruning the parameters, such that the dependency among workers is reduced. 
We showed that RePurpose can significantly reduce the number of cross-communication between workers and improve the computation time significantly, while the performance loss of the modified model is remained negligible.



\bibliographystyle{plainnat}
\bibliography{references}

\newpage
\appendix
\section{Complexity of Naive Direct Partitioning}
Consider distributing processing of a layer of a deep neural network with $N$ neurons over $P$ workers. Without assuming any constraint on the number of neurons per worker, there are $P$ possible assignments for each neuron, hence, the total possible neuron assignments to the workers would be $P^N$. 

Now, assume that exactly $n_k$ neurons have to be assigned to the $k$-th worker, where $\sum_k n_k = N$. Clearly, there are 
\begin{equation*}
	{N \choose {n_1,n_2,\ldots,n_P}}
\end{equation*}
possible neuron assignment to the workers. 
To have a relatively balanced neuron assignment (i.e., no worker or a small subset of workers has to process almost all signals), we assume that $n_k=c_k\,N$, where $c_k =\Theta(1/P)$, i.e., there exists $\alpha,\beta>0$ such that $\alpha N/P \leq n_k \leq \beta N/P$. Using Stirling's approximation for factorial, $n_k!\sim \sqrt{2\pi n_k}\, (\frac{n_k}{e})^{n_k}$, and noting that $n_k=N\Theta(\frac{1}{P})$, $\sum_k n_k = N$, we have
\begin{align*}
	{N \choose {n_1,n_2,\ldots,n_P}} &\sim \frac{\sqrt{2\pi N}\,(\frac{N}{e})^N}{\prod_{k=1}^{P}\sqrt{2\pi n_k}\, (\frac{n_k}{e})^{n_k}} \\ 
	&= \frac{\sqrt{2\pi N}\,(\frac{N}{e})^N}{\prod_{k=1}^{P}\sqrt{2\pi N\Theta(\frac{1}{P})}\, \big(\frac{N\Theta(\frac{1}{P})}{e}\big)^{n_k}} \\
	&= \frac{1}{(2\pi N)^{\frac{P}{2}-1}}\,\frac{1}{\Theta(\frac{1}{P^{N+0.5}})} \\
	&= \Theta \big(P^{N+0.5}\, N^{1-\frac{P}{2}}\big).
\end{align*}
Therefore, the direct approach to find good neuron assignment for parallel distributed inference requires evaluation of $\BigO(P^N)$ different assignments, which for large number of neurons or number of workers becomes prohibitive.
 
\section{Application of RePurpose in Deep Neural Networks}
Recall that at the core of the RePurpose algorithm is solving the optimization problem and finding the cost of assigning neuron $i$ to worker $j$, given by
\begin{equation}
	C_{ji} = \min_{\whvect{w}_i} \|\vect{w}_i-\whvect{w}_i\|_2^2 + \eta_1 \|\whvect{w}_i\|_0 + \eta_2 \|\whvect{w}_{i,\backslash j}\|_0,
	\label{eqn:PRP_param_cost2}
\end{equation}
where $\eta_1$ and $\eta_2$ control the trade-off between the error, sparsity, and cross-communication.

The basic RePurpose function and its application to a deep neural network with weights and biases $\{\vect{W}^{(l)},\vect{b}^{(l)}\}$ are summarized in Algorithms~\ref{alg:RePurposeP} and \ref{alg:RePurpose_DNN}, respectively. In Alg.~\ref{alg:RePurpose_DNN}, $n_k^{(l)}$ is the number of neurons in layer $l$ being assigned to worker $k$, and $\mathcal{H}_{\vect{E}}(\cdot)$ is the (modified) element-wise hard-thresholding operator, defined as
\begin{equation}
	\vect{Y}=\mathcal{H}_{\vect{E}}(\vect{X}):\quad 
	Y_{ij} = \left\{
		\begin{array}{ll}
			0 & \textup{if } |X_{ij}|^2\leq E_{ij} \\
			X_{ij} & \textup{o.w.}
		\end{array}
	\right.
\end{equation}

%

\centerline{
\begin{minipage}{0.96\linewidth}
\begin{algorithm}[H]
	\SetAlgoLined
	\caption{\small Applying RePurpose to Deep Neural Networks}
	\label{alg:RePurpose_DNN}
	\DontPrintSemicolon
	\KwIn{$\{\vect{W}^{(l)}\}_{_l}$, $\{\vect{b}^{(l)}\}_{_l}$, $\{n_k^{(l)}\}_{_{k,l}}$, $\eta_1$, $\eta_2$}
	\KwOut{$\{\vect{\Pi}^{(l)}\}_l$, $\{\whvect{W}^{(l)}\}_l$, $\{\whvect{b}^{(l)}\}_l$}
	$\vect{E}=\eta_1+\eta_2\vect{M}$\;
	$\vect{\Pi}^{(0)}\leftarrow \vect{I}$\;
	\For{layers $l=1,\ldots,L$}
	{
		$\vect{T}\leftarrow\vect{\Pi}^{(l-1)} \vect{W}^{(l)}$\label{alg:RePurpose_DNN_applyPi}\;
		$\vect{\Pi}^{(l)} \leftarrow \Function{RePurpose}(\vect{T},\{n_k^{(l)}\}_{_k},\eta_1,\eta_2)$\;
		$\whvect{W}^{(l)} \leftarrow \mathcal{H}_{\vect{E}} \big(\vect{T}(\vect{\Pi}^{(l)})^\Tr\big)$\;
		$\whvect{b}^{(l)} \leftarrow \vect{\Pi}^{(l)} \vect{b}^{(l)}$\;
	}
\end{algorithm}
\end{minipage}
}

Recall that when applying RePurpose to layers of a neural network, permuting neurons of layer $l$ with matrix $\vect{\Pi}$ changes the signal of that layer by $\whvect{y}^{(l)}=\vect{\Pi y}^{(l)}$ and affects the weight matrix of that layer by $\vect{W}^{(l)}\vect{\Pi}^\Tr$. As a result, $\whvect{x}^{(l+1)}=\vect{\Pi x}^{(l+1)}$ and to have the same signal at the next layer, $\vect{y}^{(l+1)}$, the weight matrix of layer $l+1$ should be modified as $\vect{\Pi W}^{(l+1)}$. Line~\ref{alg:RePurpose_DNN_applyPi} of Alg.~\ref{alg:RePurpose_DNN} accounts for these adjustments. 

\section{Performance Guarantee of RePurpose}
Consider an arbitrary neural network with $L$ layers and parameters $\{\vect{\theta}^{(l)}\}_{l=1}^L$, where $\vect{\theta}^{(l)}=\{\vect{W}^{(l)},\vect{b}^{(l)}\}$ is the parameters of the $l$-th layer. Let $\vect{x}^{(l)}$ be the input signal to the $l$-th layer. Then, the output of the layer (input to the next layer) would be given by
\begin{equation}
\vect{y}^{(l)} = (\vect{W}^{(l)})^\Tr\vect{x}^{(l)}+\vect{b}^{(l)},\quad\quad\vect{x}^{(l+1)}=\sigma(\vect{y}^{(l)}),
\end{equation}
where $\sigma(\cdot)$ is the activation function.

To analyze the performance of the modified neural network, assume that the original neural network has the following properties:
\begin{itemize}[leftmargin=25pt,itemsep=1pt,partopsep=1pt,parsep=1pt,topsep=1pt]
	\item [\textbf{A1.}] The activation functions are $1$-Lipschitz, i.e., for all $u, v$, $|\sigma(u)-\sigma(v)|\leq |u-v|$.
	\item [\textbf{A2.}] The Frobenius norms of the weights of the neural network are bounded, i.e., for some constant $\tau>0$, $\|\vect{W}^{(l)}\|_F\leq \tau$, for all layers $l=1,\ldots, L$. 
	\item [\textbf{A3.}] The signals in the neural networks are bounded, i.e., there exists a constant $B>0$ such that for input signal $\vect{x}^{(1)}=\vect{x}_{in}$, and forward signals $\{\vect{x}^{(l)}\}_{l=2}^{L}$ (outputs of the hidden layers), $\|\vect{x}^{(l)}\|_2\leq B$ for $l=1,\ldots,L$.
\end{itemize}

Moreover, suppose that the parameters $\eta_1$ and $\eta_2$ at each call of the \Function{RePurpose} are adjusted such that the solution of Lagrangian formulation \eqref{eqn:Lagrangian_RePurposeP_OPT}, given by \Function{RePurpose}, is also the solution of the following constrained optimization problem
\begin{equation}
	\min_{\whvect{W},\vect{\Pi}} \|\vect{M}\odot\whvect{W}\|_0\quad\quad \subjto~\|\whvect{W}-\vect{T\Pi}^\Tr\|_F^2\leq \epsilon.
	\label{eqn:RePurposePP-OPT}
\end{equation}
Hence, by Alg.~\ref{alg:RePurpose_DNN} and the cascade application of RePurpose, the modified weight matrix of the $l$-th layer of neural network satisfies $\|\whvect{W}^{(l)}-\vect{\Pi}^{(l-1)}\vect{W}^{(l)}(\vect{\Pi}^{(l)})^\Tr\|_F^2\leq \epsilon$. For the simplicity in notations, let  $\varepsilon=\sqrt{\epsilon}$.

\begin{theorem}
	For an input data $\vect{x}$, let $\vect{y}$ and $\whvect{y}$ be the outputs of the original and RePurposed neural network, respectively. If $\vect{\Pi}$ is the permutation of the final output neurons in the RePurposed neural network, then under assumptions \textbf{A1-3},
	\begin{equation}
		\|\whvect{y}-\vect{\Pi y}\|_2 \leq \varepsilon \frac{(\tau+\varepsilon)^L-1}{\tau+\varepsilon-1} B.
	\end{equation}
	Especially, if the parameters of the neural network are normalized such that $\|\vect{W}^{(l)}\|_F=1$, then $\|\whvect{y}-\vect{\Pi y}\|_2 \leq \big((1+\varepsilon)^L-1\big) B.$
\end{theorem}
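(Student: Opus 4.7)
The plan is to prove the bound by layer-wise induction on the error $e_l := \|\whvect{x}^{(l)} - \vect{\Pi}^{(l-1)}\vect{x}^{(l)}\|_2$, starting from $e_1 = 0$ (since the input layer is shared and we may set $\vect{\Pi}^{(0)} = \vect{I}$ as in Algorithm~\ref{alg:RePurpose_DNN}), and ending with a final non-activation step that gives $\|\whvect{y} - \vect{\Pi}^{(L)}\vect{y}\|_2$. The basic strategy is to decompose the per-layer error into two pieces: one that carries forward the error already accumulated in the previous layers, and one that is introduced by the RePurpose step at the current layer.

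First I would set $\vect{T}^{(l)} := \vect{\Pi}^{(l-1)}\vect{W}^{(l)}(\vect{\Pi}^{(l)})^\Tr$, so that by the hypothesis on \eqref{eqn:RePurposePP-OPT} we have $\|\whvect{W}^{(l)} - \vect{T}^{(l)}\|_F \le \varepsilon$, and note the identity $\vect{\Pi}^{(l)}(\vect{W}^{(l)})^\Tr = (\vect{T}^{(l)})^\Tr \vect{\Pi}^{(l-1)}$ (obtained by inserting $(\vect{\Pi}^{(l-1)})^\Tr \vect{\Pi}^{(l-1)} = \vect{I}$). Combined with $\whvect{b}^{(l)} = \vect{\Pi}^{(l)}\vect{b}^{(l)}$, this lets me write
\begin{equation*}
\whvect{y}^{(l)} - \vect{\Pi}^{(l)}\vect{y}^{(l)} = (\whvect{W}^{(l)})^\Tr\bigl(\whvect{x}^{(l)} - \vect{\Pi}^{(l-1)}\vect{x}^{(l)}\bigr) + (\whvect{W}^{(l)} - \vect{T}^{(l)})^\Tr \vect{\Pi}^{(l-1)}\vect{x}^{(l)}.
\end{equation*}
Taking $\ell_2$ norms, bounding operator norms by Frobenius norms, using $\|\whvect{W}^{(l)}\|_F \le \|\vect{T}^{(l)}\|_F + \varepsilon = \|\vect{W}^{(l)}\|_F + \varepsilon \le \tau + \varepsilon$ (permutations are isometries), and using assumption \textbf{A3} together with $\|\vect{\Pi}^{(l-1)}\vect{x}^{(l)}\|_2 = \|\vect{x}^{(l)}\|_2 \le B$, I obtain $\|\whvect{y}^{(l)} - \vect{\Pi}^{(l)}\vect{y}^{(l)}\|_2 \le (\tau+\varepsilon)\, e_l + \varepsilon B$.

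Next I would push this bound through the activation. Because $\sigma$ is applied element-wise, it commutes with any permutation: $\vect{\Pi}^{(l)}\sigma(\vect{y}^{(l)}) = \sigma(\vect{\Pi}^{(l)}\vect{y}^{(l)})$. Combined with the 1-Lipschitz assumption \textbf{A1}, this yields the recursion $e_{l+1} \le \|\whvect{y}^{(l)} - \vect{\Pi}^{(l)}\vect{y}^{(l)}\|_2 \le (\tau+\varepsilon)\, e_l + \varepsilon B$. Unrolling from $e_1 = 0$ gives the closed form $e_l \le \varepsilon B \frac{(\tau+\varepsilon)^{l-1} - 1}{\tau+\varepsilon-1}$. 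Applying the pre-activation inequality once more at the output layer,
\begin{equation*}
\|\whvect{y} - \vect{\Pi}^{(L)}\vect{y}\|_2 \le (\tau+\varepsilon)\, e_L + \varepsilon B \le \varepsilon B\, \frac{(\tau+\varepsilon)^L - 1}{\tau+\varepsilon-1},
\end{equation*}
which is the claimed bound; substituting $\tau = 1$ and simplifying the telescoping expression yields the normalized corollary.

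There is no deep obstacle: this is an accounting argument with permutations threaded through every layer. The main places to be careful are (i) the algebraic identity linking $\vect{\Pi}^{(l)}(\vect{W}^{(l)})^\Tr$ to $(\vect{T}^{(l)})^\Tr\vect{\Pi}^{(l-1)}$, which ensures that the two telescoped error terms actually line up; (ii) using Frobenius-norm bounds as upper bounds for the spectral norm, which costs nothing here but must be stated; and (iii) correctly handling the fact that Algorithm~\ref{alg:RePurpose_DNN} premultiplies $\vect{W}^{(l)}$ by $\vect{\Pi}^{(l-1)}$ before passing it to \Function{RePurpose}, so that the RePurpose error is measured against $\vect{T}^{(l)}$ rather than $\vect{W}^{(l)}(\vect{\Pi}^{(l)})^\Tr$; this is precisely why assumption \eqref{eqn:RePurposePP-OPT} is stated in terms of $\vect{T}\vect{\Pi}^\Tr$.
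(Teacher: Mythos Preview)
Your proof is correct and follows essentially the same layer-wise recursion as the paper's: both use the commutation of permutations with element-wise activations, the $1$-Lipschitz property, the Frobenius bound on operator norms, and arrive at the identical recursion $e_{l+1}\le(\tau+\varepsilon)e_l+\varepsilon B$. The only cosmetic difference is in how the pre-activation error is split: the paper pairs the weight perturbation with $\whvect{x}^{(l)}$ and the signal perturbation with $\vect{W}^{(l)}$ (then uses a triangle inequality on $\|\whvect{x}^{(l)}\|_2$), whereas you pair the weight perturbation with $\vect{\Pi}^{(l-1)}\vect{x}^{(l)}$ and the signal perturbation with $\whvect{W}^{(l)}$ (then bound $\|\whvect{W}^{(l)}\|_F\le\tau+\varepsilon$ directly)---your version is marginally cleaner but not substantively different.
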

\begin{proof}
	Let $\vect{x}^{(l)}$ and $\whvect{x}^{(l)}$ be the signals in the original and modified neural network, corresponding to the input $\vect{x}$.
	Note that $\vect{\Pi}^{(0)}=\vect{I}$ and the input to both networks are the same, $\vect{x}^{(1)}=\whvect{x}^{(1)}=\vect{x}$. Let $\vect{\Pi}^{(l)}$ and $\{\whvect{W}^{(l)}, \whvect{b}^{(l)}\}$ be the permutation matrix and parameters of the modified neural network, found via \eqref{eqn:RePurposeP-OPT}. Therefore, using $\vect{x}^{(l+1)}=\sigma((\vect{W}^{(l)})^\Tr\vect{x}^{(l)}+\vect{b}^{(l)})$, for any arbitrary layer $l$,
	\begin{align*}
		&\|\vect{\Pi}^{(l)}\vect{x}^{(l+1)}-\whvect{x}^{(l+1)}\|_2 
		\\ 
		=~
		&\|\vect{\Pi}^{(l)}\sigma((\vect{W}^{(l)})^\Tr\vect{x}^{(l)}+\vect{b}^{(l)})-\sigma((\whvect{W}^{(l)})^\Tr\whvect{x}^{(l)}+\whvect{b}^{(l)})\|_2 
		\\
		\stackrel{(a)}{=}~
		&\|\sigma(\vect{\Pi}^{(l)}(\vect{W}^{(l)})^\Tr\vect{x}^{(l)}+\vect{\Pi}^{(l)}\vect{b}^{(l)})-\sigma((\whvect{W}^{(l)})^\Tr\whvect{x}^{(l)}+\whvect{b}^{(l)})\|_2 
		\\
		\stackrel{(b)}{\leq}~
		&\|(\vect{\Pi}^{(l)}(\vect{W}^{(l)})^\Tr\vect{x}^{(l)}+\vect{\Pi}^{(l)}\vect{b}^{(l)})-((\whvect{W}^{(l)})^\Tr\whvect{x}^{(l)}+\whvect{b}^{(l)})\|_2 
		\\
		\stackrel{(c)}{=}~
		&\|\vect{\Pi}^{(l)}(\vect{W}^{(l)})^\Tr\vect{x}^{(l)}-(\whvect{W}^{(l)})^\Tr\whvect{x}^{(l)}\|_2 
		\\
		= ~
		&\|\big(\vect{\Pi}^{(l-1)}\vect{W}^{(l)}(\vect{\Pi}^{(l)})^\Tr-\whvect{W}^{(l)}\big)^\Tr\whvect{x}^{(l)}+
		\big(\vect{W}^{(l)}(\vect{\Pi}^{(l)})^\Tr\big)^\Tr\big((\vect{\Pi}^{(l-1)})^\Tr\whvect{x}^{(l)}-\vect{x}^{(l)}\big)\|_2 
		\\
		\leq ~ 
		&\|\big(\vect{\Pi}^{(l-1)}\vect{W}^{(l)}(\vect{\Pi}^{(l)})^\Tr-\whvect{W}^{(l)}\big)^\Tr\whvect{x}^{(l)}\|_2+
		\|\big(\vect{W}^{(l)}(\vect{\Pi}^{(l)})^\Tr\big)^\Tr\big((\vect{\Pi}^{(l-1)})^\Tr\whvect{x}^{(l)}-\vect{x}^{(l)}\big)\|_2 
		\\
		\stackrel{(d)}{\leq} ~
		&\|\vect{\Pi}^{(l-1)}\vect{W}^{(l)}(\vect{\Pi}^{(l)})^\Tr-\whvect{W}^{(l)}\|_F \, \|\whvect{x}^{(l)}\|_2 + 
		\|\vect{W}^{(l)}(\vect{\Pi}^{(l)})^\Tr\|_F \, \|(\vect{\Pi}^{(l-1)})^\Tr\whvect{x}^{(l)}-\vect{x}^{(l)}\|_2 
		\\
		= ~
		&\|\whvect{W}^{(l)}-\vect{\Pi}^{(l-1)}\vect{W}^{(l)}(\vect{\Pi}^{(l)})^\Tr\|_F\,\|\whvect{x}^{(l)}\|_2 + 
		\|\vect{W}^{(l)}\|_F\,\|\whvect{x}^{(l)}-\vect{\Pi}^{(l-1)}\vect{x}^{(l)}\|_2
		\\
		\stackrel{(e)}{\leq} ~
		&\varepsilon\|\whvect{x}^{(l)}\|_2 +  \tau\|\vect{\Pi}^{(l-1)}\vect{x}^{(l)}-\whvect{x}^{(l)}\|_2 
		\\
		\leq ~
		&\varepsilon\big(\|\whvect{x}^{(l)}-\vect{\Pi}^{(l-1)}\vect{x}^{(l)}\|_2+\|\vect{\Pi}^{(l-1)}\vect{x}^{(l)}\|_2\big) +  \tau\|\vect{\Pi}^{(l-1)}\vect{x}^{(l)}-\whvect{x}^{(l)}\|_2 
		\\
		= ~
		&(\tau+\varepsilon)\|\vect{\Pi}^{(l-1)}\vect{x}^{(l)}-\whvect{x}^{(l)}\|_2 + \varepsilon\|\vect{x}^{(l)}\|_2
		\\
		\leq ~
		&(\tau+\varepsilon)\|\vect{\Pi}^{(l-1)}\vect{x}^{(l)}-\whvect{x}^{(l)}\|_2 + \varepsilon B		
	\end{align*}
	where $(a)$ is because $\vect{\Pi}\sigma(\vect{z})=\sigma(\vect{\Pi z})$ for arbitrary permutation $\vect{\Pi}$ and vector $\vect{z}$, $(b)$ is because $\sigma(\cdot)$ is $1$-Lipschitz, $(c)$ is due to the fact that $\whvect{b}^{(l)}=\vect{\Pi}^{(l)}\vect{b}^{(l)}$, $(d)$ is from $\|\vect{Az}\|_2\leq \|\vect{A}\|_2\|\vect{z}\|_2\leq \|\vect{A}\|_F\|\vect{z}\|_2$ for arbitrary $\vect{A}$ and $\vect{z}$, and $(e)$ is by assumption \textbf{A2} and \eqref{eqn:RePurposeP-OPT}. Therefore,
	\begin{equation}
		\|\vect{\Pi}^{(l)}\vect{x}^{(l+1)}-\whvect{x}^{(l+1)}\|_2 \leq (\tau+\epsilon)\|\vect{\Pi}^{(l-1)}\vect{x}^{(l)}-\whvect{x}^{(l)}\|_2 + \varepsilon B.
		\label{eqn:error_recursion}
	\end{equation}
	Since $\vect{x}=\vect{\Pi}^{(0)}\vect{x}^{(1)}=\whvect{x}^{(1)}$, \eqref{eqn:error_recursion} implies that
	\begin{equation}
		\|\vect{\Pi}^{(l)}\vect{x}^{(l+1)}-\whvect{x}^{(l+1)}\|_2 \leq \big(\sum_{k=1}^l (\tau+\epsilon)^{l-k}\big) \varepsilon B = \frac{(\tau+\varepsilon)^l-1}{\tau+\varepsilon-1} \varepsilon B.
	\end{equation}
	Specifically, for the output signals, $\vect{y}=\vect{x}^{L+1}$ and $\whvect{y}=\whvect{x}^{(L+1)}$, it implies that
	\begin{equation*}
		\|\whvect{y}-\vect{\Pi y}\|_2 \leq \varepsilon \frac{(\tau+\varepsilon)^L-1}{\tau+\varepsilon-1} B.
	\end{equation*}
\end{proof}
Therefore, if the hyperparameters of RePurpose are chosen carefully, we can ensure that the output of the modified neural network is close to the original model (after accounting for the possible rearrangement of the neurons of the output layer).

\section{Proofs of the Main Results}
\subsection{Proof of Lemma 1}
\begin{lemma*}
	The solution of 
\begin{equation}
	\min_{\vect{x}} \|\vect{y}-\vect{x}\|_2^2 + \eta_1 \|\vect{x}\|_0 + \eta_2 \|\vect{x}_{\backslash j}\|_0,
	\label{eqn:opt_problem}
\end{equation}
	is given by element-wise hard-thresholding $\vect{y}$, i.e.,
	\begin{equation}
	x_n = \left\{
	\begin{array}{ll}
	0 & \textup{if }|y_n| \leq \sqrt{\eta} \\
	y_n & \textup{o.w.}
	\end{array}
	\right.
	\end{equation}
	where $\eta=\eta_1$ or $\eta_1+\eta_2$, depending on whether neuron $n$ is in $\vect{y}_{\backslash j}$ or not.
\end{lemma*}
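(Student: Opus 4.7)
The plan is to exploit the fact that the objective in \eqref{eqn:opt_problem} is \emph{separable} across coordinates of $\vect{x}$, which reduces the vector optimization to $N$ independent scalar problems, each of which can be solved in closed form by simple case analysis.

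First I would rewrite the objective by expanding $\|\vect{y}-\vect{x}\|_2^2 = \sum_n (y_n - x_n)^2$ and noting that $\|\vect{x}\|_0 = \sum_n \mathds{1}(x_n \neq 0)$, and similarly $\|\vect{x}_{\backslash j}\|_0 = \sum_{n \notin \text{block }j} \mathds{1}(x_n \neq 0)$. Combining these, the objective becomes
\begin{equation*}
\sum_n \Big( (y_n - x_n)^2 + \eta_n \, \mathds{1}(x_n \neq 0) \Big),
\end{equation*}
where $\eta_n = \eta_1$ if index $n$ belongs to the $j$-th block, and $\eta_n = \eta_1 + \eta_2$ otherwise. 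Since each term depends only on its own $x_n$, the joint minimization decomposes into $N$ independent scalar minimizations of the form $\min_{x_n} (y_n - x_n)^2 + \eta_n \, \mathds{1}(x_n \neq 0)$.

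Next I would solve each scalar problem by a two-case comparison. If $x_n = 0$, the cost is $y_n^2$. If $x_n \neq 0$, the quadratic $(y_n - x_n)^2$ is minimized at $x_n = y_n$, contributing a cost of $\eta_n$ (note $y_n = 0$ is already covered by the first case, so we can assume $y_n \neq 0$ here without loss of generality). Comparing the two options, the optimal choice is $x_n = 0$ exactly when $y_n^2 \leq \eta_n$, i.e., $|y_n| \leq \sqrt{\eta_n}$, and $x_n = y_n$ otherwise. This reproduces the hard-threshold rule stated in the lemma, with $\eta = \eta_1$ or $\eta_1 + \eta_2$ depending on whether $n$ belongs to the $j$-th block.

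There is essentially no obstacle here; the proof is a routine separability argument followed by a one-line scalar case analysis. The only minor care needed is to handle the boundary case $|y_n| = \sqrt{\eta_n}$ (where both choices are optimal, so the thresholding rule is chosen consistently by convention) and to make clear that zeroing out $x_n$ when $|y_n| = \sqrt{\eta_n}$ gives the same objective value as keeping it. The overall argument is therefore short, and the main value of stating it explicitly is to justify the closed-form cost expression used later to populate the matrix $\vect{C}$ in Algorithm~\ref{alg:RePurposeP}.
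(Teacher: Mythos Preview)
Your proposal is correct and follows essentially the same argument as the paper: separability of the objective across coordinates, followed by a two-case scalar comparison ($x_n=0$ vs.\ $x_n=y_n$) yielding the hard-threshold rule. Your write-up is in fact slightly tidier, folding the two blocks into a single coordinate-dependent penalty $\eta_n$ and explicitly noting the boundary case $|y_n|=\sqrt{\eta_n}$.
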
 
\begin{proof}
	 Let $\Omega$ be the indexes in the $j$-th block. Therefore, $\vect{x}_{\backslash j}$ consists of elements of $\vect{x}$ that are not in the set $\Omega$, and
	\begin{align*}
		\|\vect{y}-\vect{x}\|_2^2 + \eta_1 \|\vect{x}\|_0 + \eta_2 \|\vect{x}_{\backslash j}\|_0 &= \sum_{n\in\Omega} (y_n-x_n)^2 + \eta_1 \Ind{x_n \neq 0} \\
		&+ \sum_{n\notin \Omega} (y_n-x_n)^2 + (\eta_1+\eta_2) \Ind{x_n \neq 0},
	\end{align*}
	where $\Ind{z}=1$ if $z$ is true and is $0$ otherwise. Therefore, the minimization in \eqref{eqn:opt_problem} can be cast as separate minimizations over scalars $x_n$. For example, if $n\in\Omega$, there are two possibilities for $x_n$,
	\begin{equation*}
	\left \{ 
	\begin{array}{ll}
	x_n = 0 & \Rightarrow cost = y_n^2 \\
	x_n\neq 0 & \Rightarrow cost = \min_{x_n\neq 0} (y_n-x_n)^2 + \eta_1 = \eta_1
	\end{array}
	\right.
	\end{equation*}
	Hence, the solution would be 
	\begin{equation*}
		n\in \Omega:\quad x_n^* = \left\{
		\begin{array}{ll}
			0 & \textup{if } |y_n|\leq \sqrt{\eta_1} \\
			y_n & \textup{o.w.}
		\end{array}
		\right.
	\end{equation*}
	
	Similarly, 
	\begin{equation*}
		n\notin \Omega:\quad x_n^* = \left\{
		\begin{array}{ll}
		0 & \textup{if } |y_n|\leq \sqrt{\eta_1+\eta_2} \\
		y_n & \textup{o.w.}
		\end{array}
		\right.
	\end{equation*}
\end{proof}

\subsection{Proof of Theorem 2}
\begin{theorem*}
	Algorithm 1 finds the optimum solution of 
	\begin{equation}
		\min_{\whvect{W},\vect{\Pi}}\|\whvect{W}-\vect{W\Pi}^\Tr\|_F^2 + \eta_1 \|\whvect{W}\|_0 + \eta_2\|\vect{M}\odot\whvect{W}\|_0,
	\end{equation}
	with time complexity $\BigO(N^3)$, where $N$ is the number of layer's neurons (number of columns of $\vect{W}$).
\end{theorem*}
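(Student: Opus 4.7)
The plan is to peel the objective into two nested subproblems: an outer combinatorial assignment of neurons to workers, with an inner column-wise pruning that has already been solved by Lemma~1. First I would observe that all three terms decompose column-wise: writing $\pi$ for the permutation associated with $\vect{\Pi}$ (so column $i$ of $\vect{W\Pi}^\Tr$ equals $\vect{w}_{\pi(i)}$), we have $\|\whvect{W}-\vect{W\Pi}^\Tr\|_F^2=\sum_i\|\whvect{w}_i-\vect{w}_{\pi(i)}\|_2^2$, and the two $\ell_0$ penalties split in the same way. Because $\vect{M}$ is the fixed anti-diagonal block pattern induced by $\{n_k\}$, its $i$-th column depends only on which worker $j$ slot $i$ lies in, and not on $\pi(i)$. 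Consequently, once the assignment commits original neuron $\pi(i)$ to a slot owned by worker $j$, minimizing the contribution of column $i$ over $\whvect{w}_i$ is exactly the scalar problem of Lemma~1, whose minimum value equals $c_{j,\pi(i)}$ as defined in \eqref{eqn:PRP_param_cost} and is attained by the hard-thresholding rule \eqref{eqn:PRP_optimum_weight}.

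With the inner problem dispatched, the outer minimization reduces to choosing an assignment $\sigma\colon\{1,\ldots,N\}\to\{1,\ldots,P\}$ of original neurons to workers, subject to $|\sigma^{-1}(k)|=n_k$, so as to minimize $\sum_i c_{\sigma(i),i}$. Any feasible $\sigma$ is realized by some permutation $\vect{\Pi}$, and conversely the internal ordering of neurons inside a given worker does not affect the objective, so the two problems share the same optimum. To turn this into a balanced linear assignment problem I would blow up worker $k$ into $n_k$ identical slots by duplicating the $k$-th row of $\vect{C}$ exactly $n_k$ times, producing the $N\times N$ matrix $\wtvect{C}$. A minimum-cost perfect matching in $\wtvect{C}$ corresponds bijectively (up to permuting indistinguishable duplicates of a row) to a minimum-cost cardinality-constrained $\sigma$, since matching any neuron $i$ to any copy of row $k$ incurs the same cost $c_{k,i}$. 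The Munkres algorithm applied to $\wtvect{C}$ therefore returns a globally optimal slot-to-neuron matching $(I,J)$, from which $\vect{\Pi}$ is read off by $\Pi_{I_n,J_n}=1$, and the optimal $\whvect{W}$ is filled in by the thresholding formula column by column.

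For the complexity, each entry of $\vect{C}$ is obtained by the closed form of Lemma~1, costing $\BigO(N)$ for the length-$N$ column, so assembling $\vect{C}$ is $\BigO(PN\cdot N)\subseteq \BigO(N^3)$ using $P\le N$; the row expansion is cost-free asymptotically; the Munkres step on an $N\times N$ matrix is the classical $\BigO(N^3)$ bound; and reading off $\vect{\Pi}$ and applying the thresholds to obtain $\whvect{W}$ are $\BigO(N^2)$. Summing these yields the claimed $\BigO(N^3)$.

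The only delicate step I anticipate is the equivalence between the cardinality-constrained assignment and the standard balanced assignment on $\wtvect{C}$. This has to be justified both in value and in solution: every feasible $\sigma$ lifts to a matching in $\wtvect{C}$ of the same cost by distributing the $n_k$ neurons of worker $k$ across its $n_k$ duplicated rows, and every matching in $\wtvect{C}$ projects back to a feasible $\sigma$ by forgetting the duplicate index, with the cost preserved because all duplicates of row $k$ share the same entries. Symmetry across copies makes this argument essentially immediate, but it should be spelled out so that the appeal to Munkres truly certifies global optimality of \eqref{eqn:Lagrangian_RePurposeP_OPT}.
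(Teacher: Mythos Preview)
Your proposal is correct and follows essentially the same route as the paper: a column-wise decomposition that reduces the inner problem to Lemma~1, followed by a reduction of the outer permutation search to a balanced assignment on the row-replicated matrix $\wtvect{C}$, solved by Munkres in $\BigO(N^3)$. The only cosmetic difference is that the paper first substitutes $\vect{X}=\whvect{W}\vect{\Pi}$ (so that the mask, rather than $\vect{W}$, carries the permutation) before decomposing, whereas you keep $\whvect{W}$ and track $\vect{w}_{\pi(i)}$ directly; both lead to the identical cost $c_{j,\pi(i)}$ and the same assignment problem.
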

\begin{proof}
	First, we note that for any permutation matrix $\vect{\Pi}$, $\|\whvect{W}-\vect{W\Pi}^\Tr\|_F^2 = \|\whvect{W}\vect{\Pi}-\vect{W}\|_F^2$, $\|\whvect{W}\|_0 = \|\whvect{W}\vect{\Pi}\|_0$, and $\|\vect{M}\odot\whvect{W}\|_0 = \|(\vect{M\Pi})\odot(\whvect{W}\vect{\Pi})\|_0$. Therefore, by defining $\vect{X}=\whvect{W}\vect{\Pi}$, the optimization \eqref{eqn:Lagrangian_RePurposeP_OPT} can be rewritten as
	\begin{align*}
		&\min_{\vect{\Pi}} \min_{\vect{X}} \|\vect{X}-\vect{W}\|_F^2 + \eta_1 \|\vect{X}\|_0 + \eta_2 \|(\vect{M\Pi})\odot\vect{X}\|_0 \\
		=&\min_{\vect{\Pi}} \min_{\vect{X}} \sum_{{i,k:\, \Pi_{k,i}=1}} \|\vect{x}_i-\vect{w}_i\|_2^2 + \eta_1 \|\vect{x}_i\|_0 + \eta_2 \|\vect{m}_k\odot \vect{x}_i\|_0, \\
		=&\min_{\vect{\Pi}} \sum_{{i,k:\, \Pi_{k,i}=1}} \min_{\vect{x}_i}\|\vect{x}_i-\vect{w}_i\|_2^2 + \eta_1 \|\vect{x}_i\|_0 + \eta_2 \|\vect{m}_k\odot \vect{x}_i\|_0.
	\end{align*}
	On the other hand, recall that $\vect{M}=1-\diag{\vect{1}_{\iota_{_1}\times n_{_1}},\ldots,\vect{1}_{\iota_{_P}\times n_{_P}}}$, and hence if $\vect{m}_k$ is from the $j$-th sub-block, i.e., it corresponds to the $j$-th worker, the inner minimization would be 
	\begin{equation}
		C_{ji}=\min_{\vect{x}_i}\|\vect{x}_i-\vect{w}_i\|_2^2 + \eta_1 \|\vect{x}_i\|_0 + \eta_2 \|\vect{x}_{i,\backslash j}\|_0.
		\label{eqn_cji}
	\end{equation}
	By repeating the $k$-th row of matrix $\vect{C}$ whose elements are defined as \eqref{eqn_cji} to construct the new $N\times N$ matrix $\wtvect{C}$, we will have $C_{ji}=\tilde{C}_{ki}$. Therefore, 
	\begin{align*}
		\min_{\whvect{W},\vect{\Pi}}\|\whvect{W}-\vect{W\Pi}^\Tr\|_F^2 + \eta_1 \|\whvect{W}\|_0 + \eta_2\|\vect{M}\odot\whvect{W}\|_0 = 
		\min_{\vect{\Pi}} \sum_{{(i,k):\, \Pi_{k,i}=1}} \tilde{C}_{ki}.
	\end{align*}
	As a result, selecting the best neuron assignment boils down to choosing $N$ elements from $\wtvect{C}$ such that from each row or column only one element is selected and the sum of the selected values is minimum. This problem can be solved efficiently in polynomial time using the Hungarian algorithm. \cite{Tomizawa1971Techniques,Jonker1987} solve the assignment algorithm with $\BigO(N^3)$ time complexity. Since the complexity of creating $\wtvect{C}$ is at most $\BigO(N^2)$, the total complexity of Algorithm 1 would be $\BigO(N^3)$.
\end{proof}

\section{Reduction in Computational Complexity}
\label{sec:comp-complexity}
One major benefit of applying RePurpose, as demonstrated in simulations, is the reduction in the computational complexity. For the sake of simplicity, assume that there are $P=2$ workers. Recall that the computations at worker $1$ is given as $\vect{y}_1=\vect{W}_{11}^\Tr \vect{x}_1+\vect{b}_1+\vect{W}_{12}^\Tr\vect{x}_2$. By the application of RePurpose to the weight matrix $\vect{W}$, the off-diagonal blocks, $\vect{W}_{12}$ and $\vect{W}_{21}$, become sparse. Let $\Omega$ be the indexes of the columns of $\vect{W}_{12}$ which are non-zero, and define $\wtvect{W}_{12}$ to be the restriction of $\vect{W}_{12}$ to those non-zero columns. Similarly, define $\wtvect{x}_2$ to be the restriction of $\vect{x}_2$ to the indexes given by $\Omega$. Therefore, $\vect{y}_1$ can be more efficiently calculated as $\vect{y}_1=\vect{W}_{11}^\Tr \vect{x}_1+\vect{b}_1+\wtvect{W}_{12}^\Tr\wtvect{x}_2$. If $\vect{W}_{12}$ is an $m\times n$ matrix, the computational complexity and the communication requirement of the cross-term $\vect{W}_{12}^\Tr\vect{x}_2$ in the original calculation would be $\BigO(mn)$ and $\BigO(m)$, respectively. RePurpose reduces these complexities to $\BigO(|\Omega|n)$ and $\BigO(|\Omega|)$. As shown in simulations, the set $\Omega$ can be extremely small, making the computational complexity of the cross-term negligible. For example, in applying the proposed technique to an $N \times N$ matrix to distributed its computations over $2$ workers, if the number of cross dependencies are reduced by a factor of $10$, then the computational complexity of matrix multiplication would be reduced to $0.275\,N^2$ per worker, almost $1.8$ times reduction from $N^2/2$ in naive parallel implementation.

\section{Extension of RePurpose to Convolutional Layers}
Consider a convolutional layer whose input consists of $c_{in}$ channels of $d$-dimensional tensors and its output has $c_{out}$ channels. Let $h(z_0,\ldots,z_{d-1},c_{in}, c_{out})$ be the kernel. For the sake of simplicity in notations, we ignore strides and dilation in convolution operator. Hence, the output would be
\begin{equation*}
	O(x_0,\ldots,x_{d-1},k) = \sum_{l=1}^{c_{in}} \sum_{z_0,\ldots,z_{d-1}} h(z_0,\ldots,z_{d-1},l, k)\, I(x_0+z_0,\ldots,x_{d-1}+z_{d-1}, l),
\end{equation*} 
where $I(\cdot)$ is the input $d$-dimensional tensor with $c_{in}$ channels and $O(\cdot)$ is the output tensor. 

Note that due to the nature of the convolution operator, it is not possible to rearrange the neurons within each channel (e.g., changing locations of pixels in images). However, we propose to change the order of the channels. Note that the convolution can be rewritten as
\begin{equation*}
	O_k(x_0,\ldots,x_{d-1}) = \sum_{l=1}^{c_{in}} h_{l,k}*I_l\,(x_0,\ldots,x_{d-1}),
\end{equation*} 
where $h_{l,k}(\cdots)=h(\cdots,l,k)$ is the kernel connecting input channel $l$ to output channel $k$, $I_l(\cdot)$ is the $l$-th channel of the input tensor, and $O_k(\cdot)$ is the $k$-th output channel. Now, similar to \eqref{eqn:PRP_param_cost}, we can define the cost of assigning \emph{channel} $i$ to the $j$-th worker as follows:
\begin{equation}
	C_{ji} = \min_{\{\whvect{h}_{l,i}\}} \sum_{l=1}^{c_{in}} \|\vect{h}_{l,i}-\whvect{h}_{l,i}\|_F^2 + \eta_1 \sum_{l=1}^{c_{in}} \Ind{\whvect{h}_{l,i}\neq \vect{0}} + \eta_2 \sum_{l: l\notin\set{C}_{in}(j)} \Ind{\whvect{h}_{l,i}\neq \vect{0}},
	\label{eqn:PRP_conv_cost}
\end{equation}
where $\set{C}_{in}(j)$ is the set of input channels located at the $j$-th worker, and $\Ind{z}=1$ if $z$ is true, and is $0$, otherwise. Note that for the convolutional layers, we treat the individual filters as a whole, and the entire channel filter may be set to zero, not the individual coefficients. The solution of \eqref{eqn:PRP_conv_cost} is given by hard-thresholding,
\begin{equation}
	\whvect{h}_{l,i} = \left\{
		\begin{array}{ll}
			\vect{0} & \|\vect{h}_{l,i}\|_F^2 \leq \eta \\
			\vect{h}_{l,i} & \textup{o.w.}
		\end{array}
	\right.
\end{equation}
where $\eta=\eta_1$ if $l\in\set{C}_{in}(j)$ and $\eta=\eta_1+\eta_2$, otherwise.

With the new assignment cost, RePurpose for convolutional layers is simply given as in Alg.~\ref{alg:RePurposeP}.

\end{document}